\definecolor{pearDark}{HTML}{2980B9}
\newcommand{\xmark}{\ding{55}}%
\DeclareMathOperator*{\argmin}{arg\,min}
\newcommand*{\eqdef}{\stackrel{\text{def}}{=}}
\theoremstyle{plain}
\newtheorem{theorem}{Theorem}[section]
\newtheorem{proposition}[theorem]{Proposition}
\theoremstyle{definition}
\theoremstyle{remark}
\definecolor{CustomRed}{HTML}{D2042D}
\colorlet{MyRed}{CustomRed}
\icmltitlerunning{Inverse Bridge Matching Distillation}
\begin{document}

\twocolumn[
\icmltitle{Inverse Bridge Matching Distillation}



\icmlsetsymbol{equal}{*}

\begin{icmlauthorlist}
\icmlauthor{Nikita Gushchin}{skoltech,airi}
\icmlauthor{David Li}{equal,skoltech,mipt}
\icmlauthor{Daniil Selikhanovych}{equal,skoltech,yandex,hse}
\icmlauthor{Evgeny Burnaev}{skoltech,airi}
\icmlauthor{Dmitry Baranchuk}{yandex}
\icmlauthor{Alexander Korotin}{skoltech,airi}
\end{icmlauthorlist}

\icmlaffiliation{skoltech}{Skolkovo Institute of Science and Technology}
\icmlaffiliation{yandex}{Yandex Research}
\icmlaffiliation{hse}{HSE University}
\icmlaffiliation{airi}{Artificial Intelligence Research Institute}
\icmlaffiliation{mipt}{Moscow Institute of Physics and Technology}

\icmlcorrespondingauthor{Nikita Gushchin}{n.gushchin@skoltech.ru}
\icmlcorrespondingauthor{Alexander Korotin}{a.korotin@skoltech.ru}


\vskip 0.3in
]



\printAffiliationsAndNotice{\icmlEqualContribution} 

\begin{abstract}
Learning diffusion bridge models is easy; making them fast and practical is an art. Diffusion bridge models (DBMs) are a promising extension of diffusion models for applications in image-to-image translation. However, like many modern diffusion and flow models, DBMs suffer from the problem of slow inference. To address it, we propose a novel distillation technique based on the inverse bridge matching formulation and derive the tractable objective to solve it in practice. Unlike previously developed DBM distillation techniques, the proposed method can distill both conditional and unconditional types of DBMs, distill models in a one-step generator, and use only the corrupted images for training. We evaluate our approach for both conditional and unconditional types of bridge matching on a wide set of setups, including super-resolution, JPEG restoration, sketch-to-image, and other tasks, and show that our distillation technique allows us to accelerate the inference of DBMs from 4x to 100x and even provide better generation quality than used teacher model depending on particular setup. 
We provide the code at \url{https://github.com/ngushchin/IBMD}.
\vspace{-3mm}
\end{abstract}

\section{Introduction}
\begin{figure}[hbt!]
    \centering
    \hspace*{-2em}
    \renewcommand{\arraystretch}{0.2}
    \setlength{\tabcolsep}{1.5pt}
    \begin{tabular}{c c c c}
        & Input & IBMD (\textbf{Ours}) & Teacher \\
        \adjustbox{valign=c, rotate=90, raise=0.9em}{Super-resolution} &
        \includegraphics[width=0.17\textwidth]{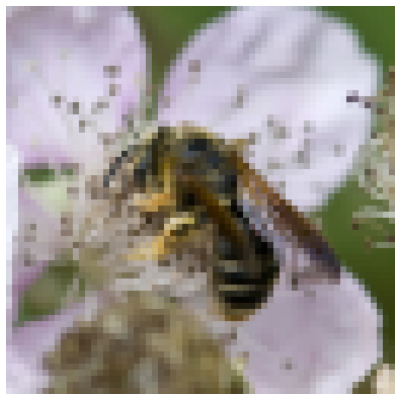} &
        \includegraphics[width=0.17\textwidth]{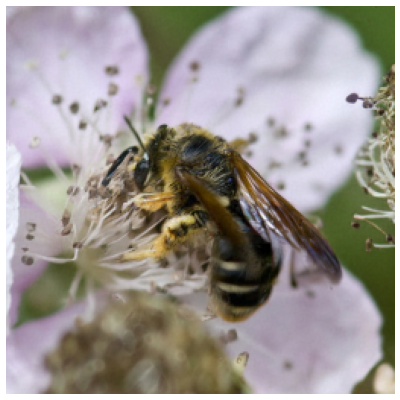} &
        \includegraphics[width=0.17\textwidth]{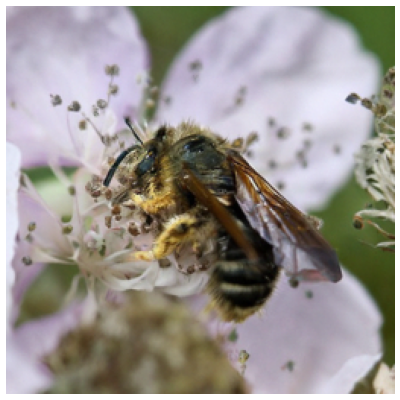} \\

        \adjustbox{valign=c, rotate=90, raise=0.8em}{JPEG restoration} &
        \includegraphics[width=0.17\textwidth]{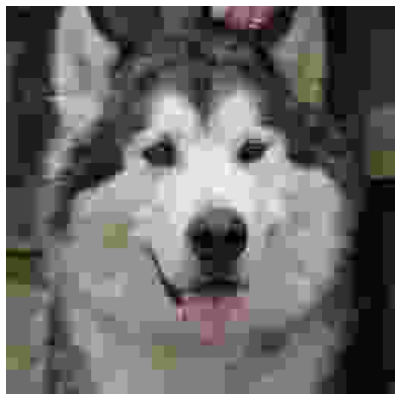} &
        \includegraphics[width=0.17\textwidth]{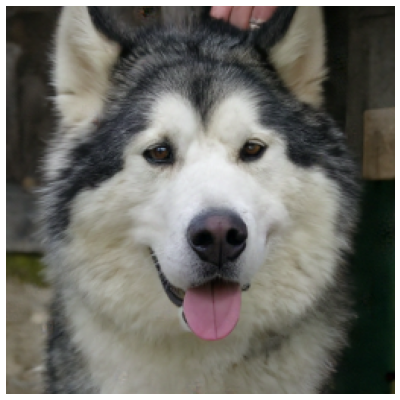} &
        \includegraphics[width=0.17\textwidth]{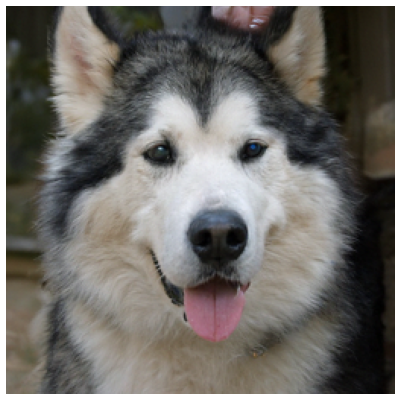} \\

        \adjustbox{valign=c, rotate=90, raise=2.3em}{Inpainting} &
        \includegraphics[width=0.17\textwidth]{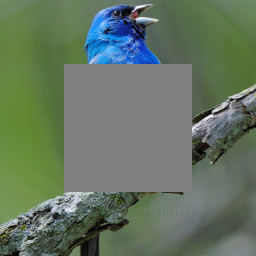} &
        \includegraphics[width=0.17\textwidth]{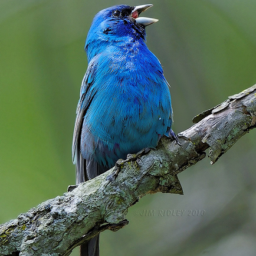} &
        \includegraphics[width=0.17\textwidth]{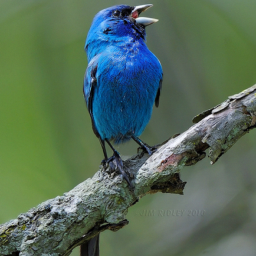} \\

        \adjustbox{valign=c, rotate=90, raise=0.7em}{Normal-to-Image} &
        \includegraphics[width=0.17\textwidth]{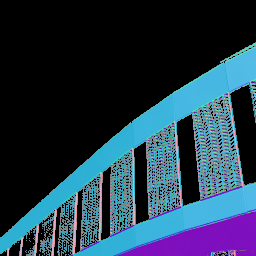} &
        \includegraphics[width=0.17\textwidth]{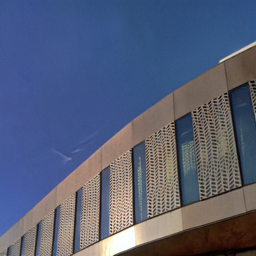} &
        \includegraphics[width=0.17\textwidth]{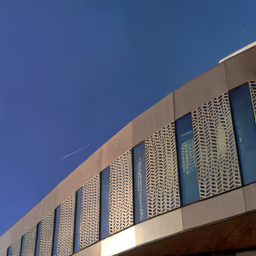} \\

        \adjustbox{valign=c, rotate=90, raise=0.9em}{Sketch-to-Image} &
        \includegraphics[width=0.17\textwidth]{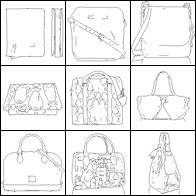} &
        \includegraphics[width=0.17\textwidth]{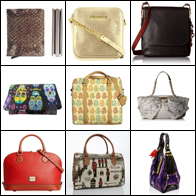} &
        \includegraphics[width=0.17\textwidth]{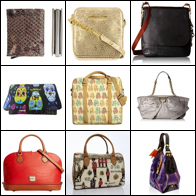} \\

    \end{tabular}
    \vspace{-4mm}
    \caption{Outputs of DBMs models distilled by our \textbf{Inverse Bridge Matching Distillation (IBMD)} approach on various image-to-image translation tasks and datasets (\wasyparagraph\ref{sec:experiments}). Teachers use NFE$\geq 500$ steps, while IBMD distilled models use NFE$\leq 4$.
    }
    \vspace{-6.5mm}
    \label{fig:teaser}
\end{figure}
Diffusion Bridge Models (DBMs) represent a specialized class of diffusion models designed for data-to-data tasks, such as image-to-image translation. 
Unlike standard diffusion models, which operate by mapping noise to data \citep{ho2020denoising, sohl2015deep}, DBMs construct diffusion processes directly between two data distributions \citep{peluchetti2023diffusion, liu2022let, somnath2023aligned, zhou2024denoising, yue2024resshift, shi2023diffusion, de2023augmented}. 
This approach allows DBMs to modify only the necessary components of the data, starting from an input sample rather than generating it entirely from Gaussian noise.
As a result, DBMs have demonstrated impressive performance in image-to-image translation problems.

The rapid development of DBMs has led to two dominant approaches, usually considered separately.
The first branch of approaches \citep{peluchetti2023diffusion, liu2022let, liu20232, shi2023diffusion, somnath2023aligned} considered the construction of diffusion between two arbitrary data distributions performing \textbf{Unconditional Bridge Matching} (also called the Markovian projection) of a process given by a mixture of diffusion bridges. 
The application of this branch includes different data like images \citep{liu20232, li2023bbdm}, audio \citep{kong2025a2sb} and biological tasks \citep{somnath2023aligned, tong2024simulation} not only in paired but also in unpaired setups using its relation to the Schrödinger Bridge problem \citep{shi2023diffusion, gushchin2024adversarial}. 
The second direction follows a framework closer to classical diffusion models, using forward diffusion to gradually map to the point of different distibution rather than mapping distribution to distribution as in previous case \citep{zhou2024denoising, yue2024resshift}.
While these directions differ in theoretical formulation, their practical implementations are closely related; for instance, models based on forward diffusion can be seen as performing \textbf{Conditional Bridge Matching} with additional drift conditions \citep{de2023augmented}.

Similar to classical DMs, DBMs also exhibit multi-step sequential inference, limiting their adoption in practice.
Despite the impressive quality shown by DBMs in the practical tasks, only a few approaches were developed for their acceleration, including more advanced sampling schemes \citep{zheng2024diffusion, wang2024implicit} and consistency distillation \citep{he2024consistency}, adapted for bridge models. 
While these approaches significantly improve the efficiency of DBMs, some unsolved issues remain.
The first one is that the developed distillation approaches are directly applicable only for DBMs based on the Conditional Bridge Matching, i.e., no universal distillation method can accelerate any DBMs. 
Also, due to some specific theoretical aspects of DBMs, consistency distillation cannot be used to obtain the single-step model \citep[Section 3.4]{he2024consistency}. 

\vspace{-1mm}
\textbf{Contributions.} To address the above-mentioned issues of DBMs acceleration, we propose a new distillation technique based on the inverse bridge matching problem, which has several advantages compared to existing methods:
\begin{enumerate}[leftmargin=*]
    \vspace{-2mm}
    \item \textbf{Universal Distillation.} Our distillation technique is applicable to DBMs trained with both conditional and unconditional regimes, making it the first distillation approach introduced for unconditional DBMs. 
    \vspace{-2mm}
    \item \textbf{Single-Step and Multi-step Distillation.} Our distillation is capable of distilling DBMs into generators with any specified number of steps, including the distillation of DBMs into one-step generators.
    \vspace{-2mm}
    \item \textbf{Target data-free distillation.} Our method does not require the target data domain to perform distillation. 
    \vspace{-4mm}
    \item \textbf{Better quality of distilled models.} Our distillation technique is tested on a wide set of image-to-image problems for conditional and unconditional DBMs in both one and multi-step regimes. It demonstrates improvements compared to the previous acceleration approaches including DBIM \citep{zheng2024diffusion} and CDBM \citep{he2024consistency}.
\end{enumerate}
\vspace{-2mm}
\section{Background}
\vspace{-1mm}
\begin{tcolorbox}[colback=gray!20, colframe=gray!20, arc=2mm, boxrule=0pt, width=1\linewidth, boxsep=-1pt]
In this paper, we propose a universal distillation framework for both conditional and unconditional DBMs. To not repeat fully analogical results for both cases, we denote by \textcolor{MyRed}{this color the additional conditioning on $x_T$} used for the conditional models, i.e. for the unconditional case this conditioning is not used.
\end{tcolorbox}

\begin{figure*}[!t]
    \centering
    \includegraphics[width=0.99\linewidth]{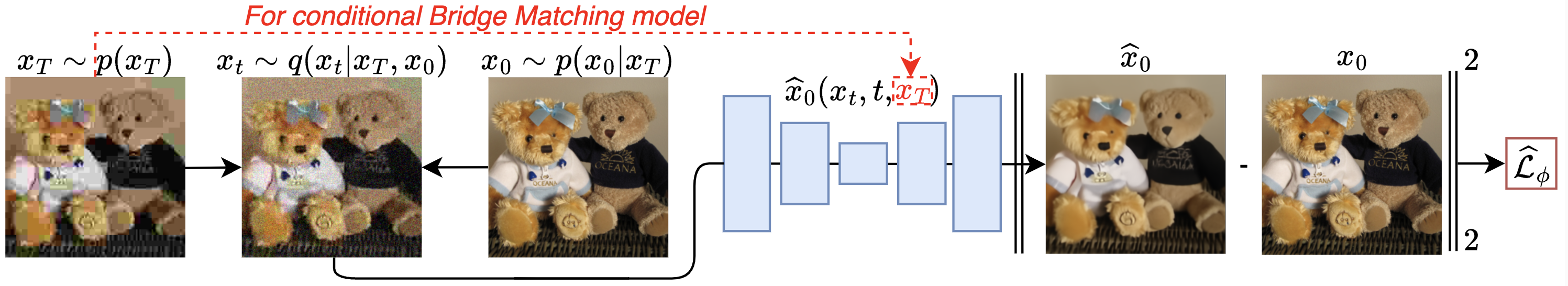}
    \vspace{-3mm}
    \caption{\textbf{Overview of \textcolor{MyRed}{(Conditional)} Bridge Matching with \(\widehat{x}_0\) reparameterization.} 
        The process begins by sampling a pair \((x_0, x_T)\) from the data coupling \( p(x_0, x_T) \). 
        An intermediate sample \( x_t \) is then drawn from the diffusion bridge \( q(x_t | x_0, x_T) \) 
        at a random time \( t \sim U[0, T] \). The model \(\widehat{x}_0\) is trained with an MSE loss 
        to reconstruct \( x_0 \) from \( x_t \). In the conditional setting \textcolor{MyRed}{(dashed red path)}, 
        \(\widehat{x}_0\) is also conditioned on \textcolor{MyRed}{\( x_T \)} as an additional input, leveraging information about the terminal state to improve reconstruction.}
    \label{fig:bm-scheme}
    \vspace{-4mm}
\end{figure*}
\vspace{-1mm}
\subsection{Bridge Matching}\label{sec:bridge-matching}
\vspace{-1mm}
We start by recalling the bridge matching method \citep{peluchetti2023non, peluchetti2023diffusion, liu2022let, shi2023diffusion}. 
Consider two probability distributions $p(x_0)$ and $p(x_T)$ on $\mathbb{R}^{D}$ dimensional space, which represent target and source domains, respectively.
For example, in an image inverse problem, $p(x_0)$ represents the distribution of \textit{clean} images and $p(x_T)$ the distribution of \textit{corrupted} images. 
Also consider a coupling $p(x_0, x_T)$ of these two distributions, which is a probability distribution on $\mathbb{R}^D \times \mathbb{R}^D$. 
Coupling $p(x_0, x_T)$ can be provided by paired data or constructed synthetically, i.e., just using the independent distribution $p(x_0, x_T) = p(x_0)p(x_T)$.
Bridge Matching aims to construct the diffusion that transforms source distribution $p(x_T)$ to target distribution $p(x_0)$ based on given coupling $p(x_0, x_T)$ and specified \textit{diffusion bridge}.

\textbf{Diffusion bridges.} Consider forward-time diffusion $Q$ called "Prior" on time horizon $[0, T]$ represented by the stochastic differential equation (SDE):
\begin{gather}
    \textit{Prior } Q: \quad dx_t = f(x_t, t)dt + g(t) dw_t, \label{eq:prior}
    \\
    f(x_t,t): \mathbb{R}^{D} \times [0, T] \rightarrow \mathbb{R}^{D}, \quad g(t): [0,T] \rightarrow \mathbb{R}^{D},
    \nonumber
\end{gather}
where $f(x_t,t)$ is a drift function, $g(t)$ is the noise schedule function and $dw_t$ is the differential of the standard Wiener process. By $q(x_t|x_s)$, we denote the transition probability density of prior process $Q$ from time $s$ to time $t$. \textit{Diffusion bridge} is a conditional process $Q_{|x_0, x_T}$, which is obtained by pinning down starting and ending points $x_0$ and $x_T$. This diffusion bridge can be derived from prior process $Q$ using the Doob-h transform \citep{doob1984classical}:
\begin{gather}
    \textit{Diffusion Bridge } Q_{|x_0, x_T}: x_0, x_T \text{ are fixed}, \label{eq:diffusion_bridge}
    \\
    dx_t = \{f(x_t, t)dt + g^2(t) \nabla_{x_t} \log q(x_T|x_t)\}dt + g(t) dw_t,
    \nonumber
\end{gather}
For this diffusion bridge we denote the distribution at time $t$ of the diffusion bridge $Q_{|x_0, x_T}$ by $q(x_t|x_0, x_T)$.

\textbf{Mixture of bridges.} Bridge Matching procedure starts with creating a \textit{mixture of bridges} process $\Pi$. This process is represented as follows:
\begin{gather}
    \textit{Mixture of Bridges } \Pi:  
    \nonumber
    \\
    \Pi(\cdot) = \int Q_{|x_0, x_T}(\cdot) p(x_0, x_T) dx_0dx_T.
    \label{eq:mixture-of-bridges}
\end{gather}
Practically speaking, the definition \eqref{eq:mixture-of-bridges} means that to sample from a mixture of bridges $\Pi$, one first samples the pair $(x_0, x_T) \sim p(x_0, x_T)$ from data coupling and then samples trajectory from the bridge $Q_{|x_0, x_T}(\cdot)$. 

\textbf{Bridge Matching problem.} The mixture of bridges $\Pi$ cannot be used for data-to-data translation since it requires first to sample a pair of data and then just inserts the trajectory. In turn, we are interested in constructing a diffusion, which can start from any sample $x_T \sim p(x_T)$ and gradually transform it to $x_0 \sim p(x_0)$. This can be done by solving the Bridge Matching problem \citep[Proposition 2]{shi2023diffusion}
\begin{gather}
    \textit{Bridge Matching problem:}  \label{eq:markovian-proj}
    \\
    \text{BM}(\Pi) \eqdef \argmin_{M \in \mathcal{M}} \text{KL}(\Pi||M),
    \nonumber
\end{gather}
where $\mathcal{M}$ is the set of Markovian processes associated with some SDE and $\text{KL}(\Pi||M)$ is the KL-divergence between a constructed mixture of bridges $\Pi$ and diffusion $M$. It is known that the solution of Bridge Matching is the reversed-time SDE \citep[Proposition 9]{shi2023diffusion}:
\begin{gather}
    \textit{The SDE of Bridge Matching solution}:  
    \label{eq:diffusio-markovian-proj}
    \\
    dx_t = \{f_t(x_t) - g^2(t)v^*(x_t, t)\} dt + g(t) d\bar{w}_t,
    \nonumber
    \\
    x_T \sim p_T(x_T),
    \nonumber
\end{gather}
where $\bar{w}$ is a standard Wiener process when time $t$ flows backward from $t=T$ to $t=0$, and $dt$ is an infinitesimal negative timestep. The drift function $v^*$ is obtained solving the following problem \citep{shi2023diffusion, liu20232}:
\begin{gather}
    \textit{Bridge Matching problem with a tractable objective:} \label{eq:bridge-matching-problem}
    \\
    \min_{\phi} \mathbb{E}_{x_0, t, x_t} \big[\| v_{\phi}(x_t, t) - \nabla_{x_t} \log q(x_t|x_0) \|^2 \big], 
    \nonumber
    \\
    (x_0, x_T) \sim p(x_0, x_T), \text{ } t \sim U([0, T]), \text{ } x_t \sim q(x_t|x_0, x_T).
    \nonumber
\end{gather}
Time moment $t$ here is sampled according to the uniform distribution on the interval $[0, T]$.

\textbf{Relation Between Flow and Bridge Matching.} The Flow Matching \citep{liu2023flow, lipman2023flow} can be seen as the limiting case $\sigma \rightarrow 0$ of the Bridge Matching for particular example see \citep[Appendix A.1]{shi2023diffusion}.
\subsection{Augmented (Conditional) Bridge Matching and Denoising Diffusion Bridge Models (DDBM)}\label{sec:augmented-bridge-matching}
For a given coupling $p(x_0, x_T) = p(x_0|x_T)p(x_T)$, one can use an alternative approach to build a data-to-data diffusion. 
Consider a set of Bridge Matching problems indexed by $x_T$ between $p_0 = p(x_0|x_T)$ and $p(x_T) = \delta_{x_{T}}(x)$ (delta measure centered at $x_T$).
This approach is called Augmented Bridge Matching \citep{de2023augmented}.
The key difference of this version in practice is that it introduces the condition of the drift function $v^*(x_t, t, x_T)$ on the starting point $x_T$ in the reverse time diffusion \eqref{eq:diffusio-markovian-proj}:
$$
    dx_t = \{f_t(x_t) - g^2(t) v^*(x_t, t, x_T)\} dt + g(t) d\bar{w}_t.
$$
The drift function $v^*$ can be recovered almost in the same way just by the addition of this condition on $\textcolor{MyRed}{x_T}$:
\begin{gather}
    \textit{\textcolor{MyRed}{Augmented (Conditional)} Bridge Matching Problem.}
    \nonumber
    \\
    \min_{\phi} \mathbb{E}_{x_0, t, x_t, \textcolor{MyRed}{x_T}} \big[\| v_{\phi}(x_t, t, \textcolor{MyRed}{x_T}) - \nabla_{x_t} \log q(x_t|x_0) \|^2 \big], 
    \nonumber
    \\
    (x_0, x_T) \sim p(x_0, x_T), \text{and } x_t \sim q(x_t|x_0, x_T).
    \nonumber
\end{gather}
Since the difference is the addition of conditioning on $x_T$, we call this approach \textit{Conditional Bridge Matching}.

\textbf{Relation to DDBM.} As was shown in the Augmented Bridge Matching \citep{de2023augmented}, the conditional Bridge Matching is equivalent to the Denoising Diffusion Bridge Model (DDBM) proposed in \citep{zhou2024denoising}. The difference is that in DDBM, the authors learn the score function of $s(x_t, x_T, t)$ conditioned on $x_T$ of a process for which $x_0 \sim p(x_0|x_T)$ and $q(x_t) \sim q(x_t|x_0, x_T)$:
Then, it is combined with the drift of forward Doob-h transform \eqref{eq:diffusio-markovian-proj} to get the reverse SDE drift $v(x_t, t, x_T)$:
\begin{gather}
    v(x_t, t, x_T) = s(x_t, x_T, t) - \nabla_{x_t} \log q(x_T|x_t),
    \nonumber
    \\
    dx_t = \{f(x_t, t)dt -g^2(t) v(x_t, t, x_T)\} dt + g(t) d\bar{w}_t,
    \nonumber
\end{gather}
or reverse probability flow ODE drift:
\begin{gather}
    v_{\text{ODE}}(x_t, t, x_T) = \frac{1}{2}s(x_t, x_T, t) - \nabla_{x_t} \log q(x_T|x_t),
    \nonumber
    \\
    dx_t = \{f(x_t, t)dt - g^2(t) v_{\text{ODE}}(x_t, t, x_T)\} dt,
    \nonumber
\end{gather}
which is used for consistency distillation in \citep{he2024consistency}.
\subsection{Practical aspects of Bridge Matching}\label{sec:practical-aspects-bm}
\textbf{Priors used in practice.}
In practice \citep{he2024consistency, zhou2023denoising, zheng2024diffusion}, the drift of the prior process is usually set to be $f(x_t, t) = f(t)x_t$, i.e, it depends linearly on $x_t$. For this process the transitional distribution $q(x_t|x_0) = \mathcal{N}(x_t|\alpha_t x_0, \sigma^2_t I)$ is Gaussian, where:
$$
    f(t) = \frac{d\log \alpha_t}{dt}, \quad g^2(t) = \frac{d\sigma^2_t}{dt} - 2\frac{d\log \alpha_t}{dt}\sigma^2_t.
$$
The bridge process distribution is also a Gaussian $q(x_t|x_0, x_T) = \mathcal{N}(x_T|a_tx_T + b_tx_0, c_t^2I)$ with coefficients:
\begin{gather}
    a_t = \frac{\alpha_t}{\alpha_T} \frac{\text{SNR}_T}{\text{SNR}_t}, b_t = \alpha_t \left(1 - \frac{\text{SNR}_T}{\text{SNR}_t}\right), 
    \nonumber
    \\
    c_t^2 = \sigma_t^2 \left(1 - \frac{\text{SNR}_T}{\text{SNR}_t}\right),
    \nonumber
\end{gather}
where $\text{SNR}_t = \frac{\alpha_t^2}{\sigma_t^2}$ is the signal-to-noise ratio at time t. 

\textbf{Data prediction reparameterization.} The regression target of the loss function \eqref{eq:bridge-matching-problem} for the priors with the drift $v(x_t, t)$ is given by $\nabla_{x_t}\log q(x_t|x_0) = -\frac{x_t - \alpha_t x_0}{\sigma_t^2}$. Hence, one can use the parametrization $v(x_t, t, \textcolor{MyRed}{x_T}) = -\frac{x_t - \alpha_t \widehat{x}_0(x_t, t, \textcolor{MyRed}{x_T})}{\sigma_t^2}$ and solve the equivalent problem:
\begin{gather}
    \textit{Reparametrized \textcolor{MyRed}{(Conditional)} Bridge Matching problem:}
    \nonumber
    \\
    \min_{\phi} \mathbb{E}_{x_0, t, x_t, \textcolor{MyRed}{x_T}} \big[ \lambda(t) \| \widehat{x}^{\phi}_0(x_t, t, \textcolor{MyRed}{x_T}) - x_0 \|^2 \big],
    \label{eq:conditional-bridge-matching-problem}
    \\
    (x_0, x_T) \sim p(x_0, x_T), \text{ } t \sim U([0, T]), \text{ } x_t \sim q(x_t|x_0, x_T),
    \nonumber
\end{gather}
where $\lambda(t)$ is any positive weighting function. Note that $\textcolor{MyRed}{x_T}$ is used only for the Conditional Bridge Matching model. 
\subsection{Difference Between Acceleration of Unconditional and Conditional DBMs}\label{sec:diff-between-augmented-and-not}
Since both conditional and unconditional approaches learn drifts of SDEs, they share the same problems of long inference.
However, these models significantly differ in the approaches that can accelerate them. 
The source of this difference is that Conditional Bridge Matching considers the set of problems of reversing diffusion, which gradually transforms distribution $p(x_0|x_T)$ to the fixed point $x_T$. 
Furthermore, the forward diffusion has simple analytical drift and Gaussian transitional kernels. 
Thanks to it, for each $x_T$ to sample, one can use the probability flow ODE and ODE-solvers or hybrid solvers to accelerate sampling \citep{zhou2024denoising} or use consistency distillation of bridge models \citep{he2024consistency}. 
Another beneficial property is that one can consider a non-Markovian forward process to develop a more efficient sampling scheme proposed in DBIM \citep{zheng2024diffusion} similar to Denoising Diffusion Implicit Models \citep{song2021denoising}.
However, in the Unconditional Bridge Matching problem, the forward diffusion process, which maps $p(x_0)$ to $p(x_T)$ without conditioning on specific point $x_T$, is unknown. Hence, the abovementioned methods cannot be used to accelerate this model type.

\begin{figure*}[!t]
    \centering
    \includegraphics[width=0.95\linewidth]{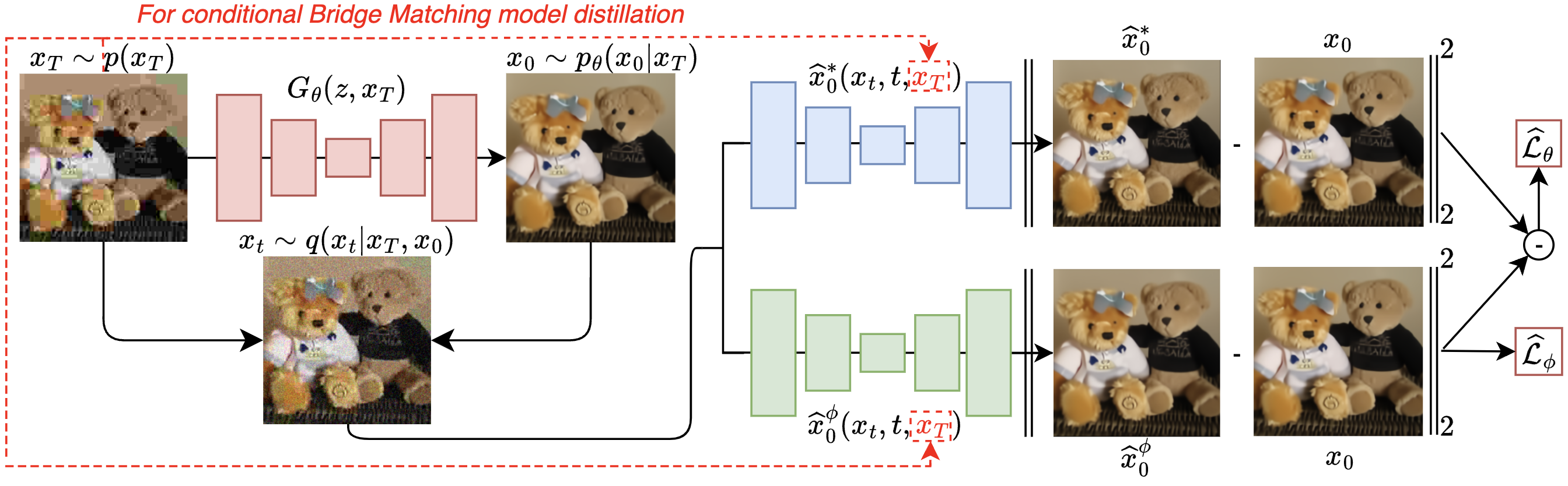}
    \vspace{-2mm}
    \caption{\textbf{Overview of our method Inverse Bridge Matching Distillation (IBMD).} 
        The goal is to distill a trained \textcolor{MyRed}{(Conditional)} Bridge Matching model into a generator \( G_{\theta}(z, x_T) \), which learns to produce samples using the corrupted data \( p(x_T) \). Generator  \( G_{\theta}(z, x_T) \) defines the coupling $p_{\theta}(x_0, x_T) = p_{\theta}(x_0|x_T)p(x_T)$ and we aim to learn the generator in such way that Bridge Matching with $p_{\theta}(x_0, x_T)$ produces the same \textcolor{MyRed}{(Conditional)} Bridge Matching model $\widehat{x}_0^{\phi} = \widehat{x}_0^{\theta}$. To do so, we learn a bridge model $\widehat{x}_0^{\phi}$ using coupling $p_{\theta}$ in the same way as the teacher model was learned. Then, we use our novel objective given in Theorem~\ref{thm:main-theorem} to update the generator model $G_{\theta}$.
        }
    \label{fig:visual-abstract}
    \vspace{-2mm}
\end{figure*}
\section{IBMD: Inverse Bridge Matching Distillation}
This section describes our proposed \underline{ universal approach} to distill the both Unconditional and \textcolor{MyRed}{(Conditional)} Bridge Matching models $v^*$ (called the teacher model) into a few-step generator \textit{using only the corrupted data} $p_T(x_T)$. The key idea of our method is to consider the inverse problem of finding the mixture of bridges $\Pi_{\theta}$, for which Bridge Matching provides the solution $v_{\theta}$ with the same drift as the given teacher model $v^*$. We formulate this task as the optimization problem (\wasyparagraph\ref{sec:inverse-problem}). However, gradient methods cannot solve this optimization problem directly due to the absence of tractable gradient estimation. To avoid this problem, we prove a theorem that allows us to reformulate the inverse problem in the tractable objective for gradient optimization (\wasyparagraph\ref{sec:main-theorem}). Then, we present the fully analogical results for the \textcolor{MyRed}{Conditional} Bridge Matching case in (\wasyparagraph\ref{sec:method-conditional}). Next, we present the multi-step version of distillation (\wasyparagraph\ref{sec:multistep}) and the final algorithm (\wasyparagraph\ref{sec:algorithm}). We provide the \underline{proofs} for all considered theorems and propositions in Appendix~\ref{app:proofs}.
\subsection{Bridge Matching Distillation as Inverse Problem}\label{sec:inverse-problem}
In this section, we focus on the derivation of our distillation method for the case of Unconditional Bridge Matching.
Consider the fitted teacher model $v^*(x_t, t)$, which is an SDE drift of some process ${M^* = \text{BM}(\Pi^*)}$, where $\Pi^*$ constructed using some data coupling $p^*(x_0, x_T) = p^*(x_0|x_T)p(x_T)$. 
We parametrize $p_{\theta}(x_0, x_T) = p_{\theta}(x_0|x_T)p(x_T)$ and aim to find such $\Pi_{\theta}$ build on $p_{\theta}(x_0, x_T)$, that $\text{BM}(\Pi^*) = \text{BM}(\Pi_{\theta})$. 
In practice, we parametrize $p_{\theta}(x_0|x_T)$ by the stochastic generator $G_{\theta}(x_T, z), z \sim \mathcal{N}(0, I)$, which generates samples based on input $x_T \sim p(x_T)$ and the gaussian noise $z$. 
Now, we formulate the inverse problem as follows:
\begin{gather}\label{eq:initial-inverse-problem}
    \min_{\theta} \text{KL}(\text{BM}(\Pi_{\theta})||M^*). 
\end{gather}
Note, that since the objective \eqref{eq:initial-inverse-problem} is the KL-divergence between $\text{BM}(\Pi_{\theta})$ and $M^*$, it is equal to $0$ if and only if $\text{BM}(\Pi_{\theta})$ and $M^*$ coincide. Furthermore, using the disintegration and Girsanov theorem \citep{vargas2021solving, pavon1991free}, we have the following result:
\begin{proposition}[Inverse Bridge Matching problem]\label{thm:inverse-bm}
The inverse problem \eqref{eq:initial-inverse-problem} is equivalent to
\begin{gather}\label{eq:constrained-inverse-bm}
    \min_{\theta} \mathbb{E}_{x_t, t} \big[\lambda(t)||v(x_t, t) - v^*(x_t, t)||^2\big], \quad \text{s.t.}
    \\
    v = \argmin_{v'} \mathbb{E}_{x_t, t, x_0} \big[\|v'(x_t, t) -  \nabla_{x_t} \log q(x_t|x_0) \|^2 \big], 
    \nonumber
    \\
    (x_0, x_T) \sim p_{\theta}(x_0, x_T), \text{ } t \sim U([0, T]), \text{ } x_t \sim q(x_t|x_0, x_T),
    \nonumber
\end{gather}
where $\lambda(t)$ is any positive weighting function.
\end{proposition}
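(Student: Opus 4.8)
\emph{Proof plan.} The plan is to exploit that both $\text{BM}(\Pi_{\theta})$ and $M^* = \text{BM}(\Pi^*)$ are Markovian diffusions of the form \eqref{eq:diffusio-markovian-proj} sharing the \emph{same} diffusion coefficient $g(t)$ and the same terminal-time marginal, differing only through their drift corrections $v$ and $v^*$. The KL-divergence between two such path measures then reduces, by the Girsanov theorem, to a weighted mean-squared distance between the drifts, while the Bridge Matching characterization supplies the inner constraint on $v$. Concretely, I would split the argument into identifying the drift of $\text{BM}(\Pi_{\theta})$ and then evaluating the KL.

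\emph{Step 1: identify the drift of $\text{BM}(\Pi_{\theta})$.} By definition $\text{BM}(\Pi_{\theta})$ is the Markovian projection of the mixture of bridges \eqref{eq:mixture-of-bridges} built on $p_{\theta}(x_0,x_T)$. Invoking the tractable Bridge Matching characterization \eqref{eq:bridge-matching-problem} with the coupling $p_{\theta}$ in place of the data coupling, the drift correction $v$ of $\text{BM}(\Pi_{\theta})$ is exactly the minimizer
\[
v = \argmin_{v'} \mathbb{E}_{x_0, t, x_t}\big[\|v'(x_t,t) - \nabla_{x_t}\log q(x_t|x_0)\|^2\big],
\]
sampled with $(x_0,x_T)\sim p_{\theta}(x_0,x_T)$, $t\sim U([0,T])$, $x_t\sim q(x_t|x_0,x_T)$. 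This is precisely the constraint appearing in \eqref{eq:constrained-inverse-bm}, so this step only re-applies the already-stated Bridge Matching result to the learned coupling.

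\emph{Step 2: evaluate the KL by Girsanov.} Write $\text{BM}(\Pi_{\theta})$ and $M^*$ as the reverse-time SDEs \eqref{eq:diffusio-markovian-proj} with drifts $f(x_t,t) - g^2(t)v(x_t,t)$ and $f(x_t,t) - g^2(t)v^*(x_t,t)$ and common coefficient $g(t)$. Since $\Pi_{\theta}$ and $\Pi^*$ are built on couplings with the \emph{same} marginal $p(x_T)$, and Markovian projection preserves time-marginals, both processes share the terminal law $p_T(x_T)$ at $t=T$. The disintegration theorem decomposes $\text{KL}(\text{BM}(\Pi_{\theta})\|M^*)$ into the (vanishing) KL between terminal laws plus the path term, and Girsanov evaluates the latter as
\[
\text{KL}(\text{BM}(\Pi_{\theta})\|M^*) = \mathbb{E}\Big[\tfrac{1}{2}\int_0^T \tfrac{\|g^2(t)(v-v^*)\|^2}{g^2(t)}\,dt\Big] = \mathbb{E}\Big[\tfrac{1}{2}\int_0^T g^2(t)\,\|v - v^*\|^2\,dt\Big],
\]
the expectation running along trajectories of $\text{BM}(\Pi_{\theta})$. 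Rewriting $\tfrac12\int_0^T(\cdot)\,dt = \tfrac{T}{2}\,\mathbb{E}_{t\sim U([0,T])}(\cdot)$ and using marginal preservation to draw $x_t$ directly from $\Pi_{\theta}$ yields the objective of \eqref{eq:constrained-inverse-bm} with $\lambda(t)=\tfrac{T}{2}g^2(t)$; any positive weight leaves the minimizer unchanged. As the KL vanishes iff the drifts agree $\Pi_{\theta}$-almost everywhere, minimizing it is equivalent to \eqref{eq:initial-inverse-problem}.

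\emph{Main obstacle.} The delicate point is the rigorous application of Girsanov to the \emph{reverse-time} SDEs: one must verify mutual absolute continuity of the two path measures (a Novikov-type integrability condition on $v$ and $v^*$) and track the backward time direction consistently, which is where the disintegration step must be applied with care. A secondary subtlety is justifying that the trajectory-expectation under $\text{BM}(\Pi_{\theta})$ may be replaced by sampling $x_t$ from the mixture of bridges $\Pi_{\theta}$; this relies on the marginal-preservation property of the Markovian projection and is exactly what renders the final objective tractable.
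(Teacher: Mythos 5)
Your proposal is correct and follows essentially the same route as the paper's proof: identify the drift of $\text{BM}(\Pi_{\theta})$ via the tractable Bridge Matching characterization applied to $p_{\theta}$, reduce the KL between the two reverse-time SDEs (sharing $g(t)$ and terminal law $p_T$) to a weighted drift mean-squared error via disintegration and Girsanov, use marginal preservation of the Markovian projection to sample $x_t$ from $\Pi_{\theta}$, and absorb the resulting constant into an arbitrary positive weight $\lambda(t)$. Your version is, if anything, slightly more careful about the $\tfrac{T}{2}g^2(t)$ factor and the Novikov-type integrability needed for Girsanov, which the paper subsumes under its standing Bridge Matching assumptions.
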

Thus, this is the \textbf{constrained} problem, where the drift $v$ is the result of Bridge Matching for coupling $p_{\theta}(x_0, x_T)$ parametrized by the generator $G_{\theta}$. Unfortunately, there is no clear way to use this objective efficiently for optimizing a generator $G_{\theta}$ since it would require gradient backpropagation through the argmin of the Bridge Matching problem.
\subsection{Tractable objective for the inverse problem}\label{sec:main-theorem}
In this section, we introduce our new \textbf{unconstrained} reformulation for the inverse problem \eqref{eq:constrained-inverse-bm}, which admits direct optimization using gradient methods:
\begin{theorem}[Tractable inverse problem reformulation] \label{thm:main-theorem} 
The constrained inverse problem \eqref{eq:constrained-inverse-bm} w.r.t $\theta$ is equivalent to the unconstrained optimization problem:
\begin{gather}
    \min_{\theta} \Big[\mathbb{E}_{x_t, t, x_0} \big[\lambda(t) \| v^*(x_t, t) - \nabla_{x_t} \log q(x_t|x_0) \|^2 \big] -
    \nonumber
    \\
    \min_{\phi} \mathbb{E}_{x_t, t, x_0} \big[\lambda(t) \| v_{\phi}(x_t, t) -\nabla_{x_t} \log q(x_t|x_0) \|^2 \big] \Big],
    \nonumber
    \\
    (x_0, x_T) \sim p_{\theta}(x_0, x_T), \text{ } t \sim U([0, T]), \text{ } x_t \sim q(x_t|x_0, x_T),
    \nonumber
\end{gather}
Where the constraint in the original inverse problem \eqref{eq:constrained-inverse-bm} is relaxed by introducing the inner bridge matching problem.
\end{theorem}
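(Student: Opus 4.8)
The plan is to exploit the fact that the inner constraint in \eqref{eq:constrained-inverse-bm} is itself a least-squares problem whose minimizer is a conditional expectation, and then to invoke the Pythagorean (bias--variance) decomposition of the weighted $L^2$ loss. First I would fix $\theta$ and observe that the constraint
\[
    v = \argmin_{v'}\mathbb{E}_{x_t,t,x_0}\big[\|v'(x_t,t) - \nabla_{x_t}\log q(x_t|x_0)\|^2\big],
\]
with samples drawn from $p_\theta$, is solved pointwise by the conditional mean $v_\theta(x_t,t) = \mathbb{E}\big[\nabla_{x_t}\log q(x_t|x_0)\mid x_t,t\big]$, where the conditioning is taken under the joint law induced by $p_\theta(x_0,x_T)$ and the bridge $q(x_t|x_0,x_T)$. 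This is exactly the Bridge Matching drift for the coupling $p_\theta$, so the constraint fixes $v=v_\theta$.

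Next, writing $y \eqdef \nabla_{x_t}\log q(x_t|x_0)$, I would substitute $v_\theta$ into the outer objective and expand
\[
    \mathbb{E}\big[\lambda(t)\|v^* - y\|^2\big] = \mathbb{E}\big[\lambda(t)\|v^* - v_\theta\|^2\big] + 2\,\mathbb{E}\big[\lambda(t)\langle v^* - v_\theta,\, v_\theta - y\rangle\big] + \mathbb{E}\big[\lambda(t)\|v_\theta - y\|^2\big].
\]
The crucial step is that the cross term vanishes: conditioning on $(x_t,t)$, the factor $v^*(x_t,t) - v_\theta(x_t,t)$ is deterministic, while $\mathbb{E}[v_\theta - y\mid x_t,t] = v_\theta - \mathbb{E}[y\mid x_t,t] = 0$ by the defining property of $v_\theta$ as a conditional mean, so the tower property kills the middle term. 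Rearranging gives
\[
    \mathbb{E}\big[\lambda(t)\|v^* - v_\theta\|^2\big] = \mathbb{E}\big[\lambda(t)\|v^* - y\|^2\big] - \mathbb{E}\big[\lambda(t)\|v_\theta - y\|^2\big].
\]

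Finally I would identify each piece with the terms in the statement. The left-hand side is precisely the outer objective of the constrained problem \eqref{eq:constrained-inverse-bm} once $v=v_\theta$ is imposed; on the right-hand side the second expectation equals $\min_\phi \mathbb{E}[\lambda(t)\|v_\phi - y\|^2]$, since $v_\theta$ is by construction the minimizer of the inner Bridge Matching loss under $p_\theta$ (assuming the parametric family $v_\phi$ is expressive enough to represent the conditional mean). This reproduces exactly the bracketed unconstrained objective, so for every fixed $\theta$ the two objectives take the same value, and hence the two minimization problems over $\theta$ share their minimizers.

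I expect the main obstacle to be bookkeeping the $\theta$-dependence of the sampling law: every expectation — including the seemingly ``teacher-only'' term $\mathbb{E}[\lambda(t)\|v^* - y\|^2]$ — is taken under $p_\theta$, so none of the summands is constant in $\theta$, and it is the entire bracketed difference, not its individual pieces, that coincides with the constrained objective. A secondary point requiring care is justifying that the $\argmin$ over $v'$ is attained at the conditional expectation in the space of measurable drifts and that the parametric inner minimization over $\phi$ actually recovers it; this is where the expressiveness assumption on $v_\phi$ and a standard $L^2$-projection argument enter.
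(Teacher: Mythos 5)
Your proof is correct and follows essentially the same route as the paper's: both identify the inner $\argmin$ with the conditional expectation $\mathbb{E}\big[\nabla_{x_t}\log q(x_t|x_0)\mid x_t,t\big]$ and then use the tower property to kill the cross term, arriving at the identity $\mathbb{E}\big[\lambda(t)\|v^*-v_\theta\|^2\big] = \mathbb{E}\big[\lambda(t)\|v^*-y\|^2\big] - \min_{v'}\mathbb{E}\big[\lambda(t)\|v'-y\|^2\big]$ under the sampling law induced by $p_\theta$. The only difference is organizational: you get there in one step via the Pythagorean (bias--variance) decomposition around $v_\theta$, whereas the paper expands $\mathbb{E}\big[\lambda(t)\|v\|^2\big]$ and the two squared norms separately and substitutes --- the same algebra, arranged slightly less compactly.
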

This is the general result that can applied with any diffusion bridge. For the priors with with drift $f(x_t, t) = f(t)x_t$, we present its reparameterized version.
\begin{proposition}[Reparameterized tractable inverse problem] \label{thm:reparametrized-main-theorem} 
Using the reparameterization (\wasyparagraph\ref{sec:practical-aspects-bm}) for the prior with the linear drift $f(x_t, t) = f(t)x_t$, the inverse problem in Theorem~\ref{thm:main-theorem} is equivalent to:
\begin{gather}
    \min_{\theta} \Big[\mathbb{E}_{x_t, t, x_0} \big[\lambda(t) \| \widehat{x}_0^*(x_t, t) -  x_0 \|^2 \big] -
    \nonumber
    \\
    \! \min_{\phi} \mathbb{E}_{x_t, t, x_0}  \big[\lambda(t) \| \widehat{x}_0^{\phi}(x_t, t) - x_0 \|^2 \big] \Big],
    \nonumber
    \\
    (x_0, x_T) \sim p_{\theta}(x_0, x_T), \text{ } t \sim U([0, T]), \text{ } x_t \sim q(x_t|x_0, x_T).
    \nonumber
\end{gather}
\end{proposition}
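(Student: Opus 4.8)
The plan is to derive the reparameterized statement directly from Theorem~\ref{thm:main-theorem} by performing the pointwise change of variables from the drift $v$ to the data-prediction $\widehat{x}_0$, and then showing that this substitution only rescales the weighting. First I would recall from the practical-aspects section (\wasyparagraph\ref{sec:practical-aspects-bm}) that for a linear-drift prior $f(x_t,t)=f(t)x_t$ the transition is Gaussian, so that the regression target has the closed form $\nabla_{x_t}\log q(x_t|x_0) = -\frac{x_t-\alpha_t x_0}{\sigma_t^2}$, and the drift is reparameterized as $v(x_t,t) = -\frac{x_t-\alpha_t\widehat{x}_0(x_t,t)}{\sigma_t^2}$. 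I would then apply this substitution simultaneously to the teacher drift $v^*$, to the inner candidate $v_\phi$, and to the target, so that every residual appearing in the two squared norms of Theorem~\ref{thm:main-theorem} is expressed through $\widehat{x}_0$.

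The key computation is the cancellation of the $x_t/\sigma_t^2$ contributions: subtracting the target from the reparameterized drift gives
\begin{gather}
v^*(x_t,t) - \nabla_{x_t}\log q(x_t|x_0) = \frac{\alpha_t}{\sigma_t^2}\big(\widehat{x}_0^*(x_t,t) - x_0\big),
\nonumber
\end{gather}
and an identical identity holds for $v_\phi$ with $\widehat{x}_0^{\phi}$ in place of $\widehat{x}_0^{*}$. Taking squared norms therefore factors out the common positive scalar $\frac{\alpha_t^2}{\sigma_t^4}$ from both the teacher term and the inner-minimization term, leaving precisely the residuals $\|\widehat{x}_0^*(x_t,t)-x_0\|^2$ and $\|\widehat{x}_0^{\phi}(x_t,t)-x_0\|^2$ that appear in the proposition.

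Next I would absorb this factor into the weighting by setting $\widetilde{\lambda}(t) \eqdef \lambda(t)\,\frac{\alpha_t^2}{\sigma_t^4}$. Since $\lambda(t)>0$ and $\alpha_t,\sigma_t\neq 0$ on $[0,T]$, the map $\lambda \mapsto \widetilde{\lambda}$ is a bijection of the set of positive weighting functions onto itself; hence quantifying over all positive $\lambda$ in Theorem~\ref{thm:main-theorem} is the same as quantifying over all positive $\widetilde{\lambda}$ in the reparameterized statement, and the two outer $\min_\theta$ problems describe the same family and share the same minimizer. The one point needing care is the inner $\min_\phi$: because the reparameterization $v \leftrightarrow \widehat{x}_0$ is, for each fixed $(x_t,t)$, an invertible affine map, minimizing over the drift family $\{v_\phi\}$ and over the data-prediction family $\{\widehat{x}_0^{\phi}\}$ attain the same optimal value — the same model is merely read through two equivalent output conventions. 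I do not expect a serious obstacle; the only subtlety is confirming this closure of the parametric family under the affine change of variables, which holds whenever $v_\phi$ and $\widehat{x}_0^{\phi}$ denote the same network up to the fixed reparameterization.
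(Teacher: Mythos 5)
Your proposal is correct and follows essentially the same route as the paper's proof: substitute the linear-drift reparameterization $v(x_t,t) = -\frac{x_t - \alpha_t \widehat{x}_0(x_t,t)}{\sigma_t^2}$ into both the teacher term and the inner minimization, observe the cancellation leaving $\frac{\alpha_t^2}{\sigma_t^4}\|\widehat{x}_0 - x_0\|^2$, and absorb the factor $\frac{\alpha_t^2}{\sigma_t^4}$ into a new positive weighting function. Your two added remarks — that $\lambda \mapsto \lambda\,\frac{\alpha_t^2}{\sigma_t^4}$ is a bijection on positive weightings, and that the inner $\min_\phi$ is unaffected because the drift and data-prediction parameterizations are related by an invertible affine map — are points the paper leaves implicit, and they only strengthen the argument.
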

\color{black}
Thanks to the unconstrained reformulation, this problem admits explicit gradients with respect to the generator $G_\theta$, as all samples $(x_0, x_T, x_t)$ are obtained via reparameterizable transformations: $x_0 = G_\theta(x_T, z)$ with $z \sim \mathcal{N}(0, I)$, and $x_t \sim q(x_t \mid x_0, x_T)$, where $q(x_t \mid x_0, x_T)$ is a Gaussian distribution (under priors with linear drift $f(x_t, t) = f(t)x_t$). This enables differentiability of the entire objective, which involves an expectation over $p_\theta(x_0, x_T)$, and allows optimization using standard gradient-based methods.

\textbf{Interpretation of the auxiliary model $\phi$.} 
Note that the minimal value of the inner problem is the averaged variance of $x_0 \sim p_\theta(x_0 \mid x_t, x_T)$:
\begin{gather}
    \min_\phi \mathbb{E}_{x_t, t, x_0} \left[ \lambda(t) \left\| \widehat{x}_0^\phi(x_t, t) - x_0 \right\|^2 \right] =
    \nonumber \\
    \mathbb{E}_{x_t, t, x_0} \left[ \lambda(t) \left\| \mathbb{E}_{p_\theta(x_0 \mid x_t)}[x_0] - x_0 \right\|^2 \right] =
    \nonumber \\
    \mathbb{E}_{x_t, t} \Big( \lambda(t) \underbrace{ \left[ \mathbb{E}_{p_\theta(x_0 \mid x_t)} \left[ \left\| \mathbb{E}_{p_\theta(x_0 \mid x_t)}[x_0] - x_0 \right\|^2 \right] \right]}_{\text{Variance of } p_\theta(x_0 \mid x_t)} \Big) .
    \nonumber
\end{gather}
For $t = T$, this is directly the variance of the generator $x_0 \sim p_\theta(x_0 \mid x_T)$. Since this part comes with a negative sign in the objective, its minimization enforces the generator to produce more diverse outputs and avoid collapsing.

\color{black}

\subsection{Distillation of conditional Bridge Matching models}\label{sec:method-conditional}
Since Conditional Bridge Matching is, in essence, a set of Unconditional Bridge Matching problems for each $x_T$ (\wasyparagraph\ref{sec:augmented-bridge-matching}), the analogical results hold just by adding the conditioning on $x_T$ for $v$, i.e., using $v(x_t, t, \textcolor{MyRed}{x_T})$ or $\widehat{x}_{0}$, i.e. using $\widehat{x}_0(x_t, t, \textcolor{MyRed}{x_T})$. Here, we provide the final reparametrized formulation, which we use in our experiments:
\begin{theorem}[Reparameterized tractable inverse problem for conditional bridge matching]\label{thm:conditional-inverse-problem}
\begin{gather}
    \min_{\theta}   \Big[\mathbb{E}_{x_t, t, x_0, \textcolor{MyRed}{x_T}} \big[\lambda(t) \| \widehat{x}_0^*(x_t, t, \textcolor{MyRed}{x_T}) -  x_0 \|^2 \big] -
    \label{eq:reformulated-inverse-problem}
    \\
    \! \min_{\phi} \mathbb{E}_{x_t, t, x_0, \textcolor{MyRed}{x_T}}  \big[\lambda(t) \| \widehat{x}_0^{\phi}(x_t, t, \textcolor{MyRed}{x_T}) - x_0 \|^2 \big]\Big],
    \nonumber
    \\
    (x_0, x_T) \sim p_{\theta}(x_0, x_T), \text{ } t \sim U([0, T]), \text{ } x_t \sim q(x_t|x_0, x_T).
    \nonumber
\end{gather}
where $\lambda(t)$ is some positive weight function.
\end{theorem}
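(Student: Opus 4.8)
The plan is to reduce \Cref{thm:conditional-inverse-problem} to the already-established unconditional results by exploiting the observation (\wasyparagraph\ref{sec:augmented-bridge-matching}) that Conditional Bridge Matching is nothing but a family of Unconditional Bridge Matching problems, one for each fixed terminal state $x_T$. Concretely, I would first fix $x_T$ and regard $\widehat{x}_0^{\phi}(\cdot,\cdot,x_T)$ and $\widehat{x}_0^{*}(\cdot,\cdot,x_T)$ as functions of $(x_t,t)$ only. For this single slice the statement of \Cref{thm:reparametrized-main-theorem} applies verbatim, identifying the difference of the two expected reconstruction errors with the conditional constrained objective $\mathbb{E}_{x_t,t\mid x_T}[\lambda(t)\|\widehat{x}_0^{*}(x_t,t,x_T)-\widehat{x}_0(x_t,t,x_T)\|^2]$, where $\widehat{x}_0$ is the Bridge Matching solution for the coupling $p_\theta(x_0\mid x_T)$. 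Taking the expectation over $x_T\sim p(x_T)$ then assembles the conditional objective in \eqref{eq:reformulated-inverse-problem}.

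The algebraic engine behind the single-slice identity is the $L^2$ (bias--variance) decomposition. For fixed $x_T$ and $t$, the inner minimizer over $\phi$ is the conditional expectation $\widehat{x}_0^{\phi}(x_t,t,x_T)=\mathbb{E}_{p_\theta(x_0\mid x_t,x_T)}[x_0]$, which coincides with the reparameterized Bridge Matching drift $\widehat{x}_0$ for $p_\theta$. Because the residual $x_0-\mathbb{E}_{p_\theta(x_0\mid x_t,x_T)}[x_0]$ is orthogonal (in $L^2$) to any measurable function of $(x_t,t,x_T)$, subtracting the minimized inner error from the teacher error cancels the irreducible variance term and leaves exactly $\mathbb{E}[\lambda(t)\|\widehat{x}_0^{*}-\widehat{x}_0\|^2]$. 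I would then translate this back to the drift form via the reparameterization $v=-\tfrac{x_t-\alpha_t\widehat{x}_0}{\sigma_t^2}$, which gives $v^*-\nabla_{x_t}\log q(x_t\mid x_0)=\tfrac{\alpha_t}{\sigma_t^2}(\widehat{x}_0^{*}-x_0)$, so the factor $\alpha_t^2/\sigma_t^4$ is absorbed into $\lambda(t)$ and the reconstruction-loss form is equivalent to the score-matching form underlying \Cref{thm:main-theorem}.

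The main obstacle, and the step requiring care, is the interchange of the inner minimization over $\phi$ with the expectation over $x_T$: I must argue that minimizing a single network $\widehat{x}_0^{\phi}(x_t,t,x_T)$ jointly over all terminal states equals minimizing separately for each $x_T$ and then integrating, i.e.
\begin{gather*}
\min_{\phi}\,\mathbb{E}_{x_T}\mathbb{E}_{x_t,t,x_0\mid x_T}\big[\lambda(t)\|\widehat{x}_0^{\phi}(x_t,t,x_T)-x_0\|^2\big]=
\\
\mathbb{E}_{x_T}\Big[\min_{\phi_{x_T}}\mathbb{E}_{x_t,t,x_0\mid x_T}\big[\lambda(t)\|\widehat{x}_0^{\phi_{x_T}}(x_t,t)-x_0\|^2\big]\Big].
\end{gather*}
This decoupling holds precisely because the joint objective is an integral over $x_T$ of per-$x_T$ objectives and the function class is assumed rich enough for $\widehat{x}_0^{\phi}(\cdot,\cdot,x_T)$ to realize the per-slice conditional expectation for $p(x_T)$-almost every $x_T$ simultaneously; under this expressiveness assumption (the same one implicit in \Cref{thm:main-theorem}) the pointwise minimizers glue into a single admissible $\phi$, and the equivalence follows. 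The remaining bookkeeping --- measurability of the conditional expectations and positivity of $\lambda(t)$ ensuring the objective vanishes exactly when $\widehat{x}_0^{\theta}=\widehat{x}_0^{*}$ --- is routine.
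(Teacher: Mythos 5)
Your proposal is correct and takes essentially the same route as the paper: both proofs treat Conditional Bridge Matching as a family of unconditional inverse problems indexed by $x_T$, apply the unconditional chain (Proposition~\ref{thm:inverse-bm}, Theorem~\ref{thm:main-theorem}, Proposition~\ref{thm:reparametrized-main-theorem}) slice-by-slice, and reassemble by integrating over $x_T \sim p(x_T)$. The only difference is one of presentation: the paper re-traces the KL $\rightarrow$ constrained $\rightarrow$ tractable $\rightarrow$ reparameterized chain with the $x_T$-conditioning inserted, whereas you invoke the final reparameterized result per slice and make explicit the $\min_{\phi}$-versus-$\mathbb{E}_{x_T}$ interchange (justified since the unrestricted joint minimizer is the conditional expectation $\mathbb{E}[x_0 \mid x_t, t, x_T]$, which glues the per-slice minimizers), a step the paper leaves implicit in its ``fully analogical'' reasoning.
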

To use it in practice, we parameterize $\widehat{x}_0(x_t, t, \textcolor{MyRed}{x_T})$ by a neural network with an additional condition on $\textcolor{MyRed}{x_T}$.
\subsection{Algorithm}\label{sec:algorithm}
We provide a one-step Algorithm~\ref{alg:ibmd} that solves the inverse Bridge Matching problem in the reformulated version that we use in our experiments. We provide a visual abstract of it in Figure~\ref{fig:visual-abstract}.
Note that a teacher in the velocity parameterization $v^*(x_t, t)$ can be easily reparameterized (\wasyparagraph\ref{sec:practical-aspects-bm}) in $x_0$-prediction model using $\widehat{x}^*(x_t, t) = \frac{\sigma_t^2 v^*(x_t, t) + x_t}{\alpha_t}$.
\subsection{Mulitistep distillation}\label{sec:multistep}
We also present a multi-step modification of our distillation technique if a one-step generator struggles to distill the models, e.g., in inpainting setups, where the corrupted image $x_T$ contains less information. Our multi-step technique is inspired by similar approaches used in diffusion distillation methods \citep[DMD]{yin2024improved} and aims to avoid training/inference distribution mismatch.

We choose $N$ timesteps $\{0 < t_1 < t_2 < ... < t_N = T\}$ and add additional time input to our generator $G_{\theta}(x_t, z, t)$. For the conditional Bridge Matching case, we also add conditions on $x_T$ and use $G_{\theta}(x_t, z, t, \textcolor{MyRed}{x_T})$. To perform inference, we alternate between getting prediction from the generator $\widetilde{x}_0 = G_{\theta}(x_t, z, t)$ and using posterior sampling $q(x_{t_{n-1}}|\widetilde{x}_0, x_{t_{n}})$ given by the diffusion bridge. To train the generator in the multi-step regime, we use the same procedure as in one step except that to get input $x_t$ for intermediate times $t_n < t_N$, we first perform inference of our generator to get $x_0$ and then use bridge $q(x_t|\widetilde{x}_0, x_T)$.
\section{Related work}
\textbf{Diffusion Bridge Models (DBMs) acceleration.} 
Unlike a wide scope of acceleration methods developed for classical diffusion/flow models, only a few approaches were developed for DBM acceleration. 
\color{black}Acceleration methods include more advanced samplers \citep{zheng2024diffusion, wang2024implicit} based on a reformulated forward diffusion process as a non-markovian process inspired by Denoising Diffusion Implicit Models \citep{song2021denoising}. 
Also, there is a distillation method based on the distilling probability-flow ODE into a few steps using consistency models \cite{he2024consistency}, which is applicable only for conditional DBMs. \color{black}
However, for theoretical reasons \citep[Section 3.4]{he2024consistency}, consistency models for Diffusion Bridges cannot be distilled into one-step generators. 
\color{black} Unlike existing distillation methods, our method is applicable to both conditional and unconditional DBMs and can distill into a one-step generator. \color{black}

\textbf{Related diffusion and flow models distillation techniques.} 
Among the methods developed for the distillation of classical diffusion and flow models, the most related to our work are methods based on simultaneous training of few-step generators and auxiliary "fake" model, that predict score or drift function for the generator \citep{yin2024one, yin2024improved, zhou2024score, huang2024flow}. Unlike these approaches, we consider the distillation of Diffusion Bridge Models - the generalization of flow and diffusion models.

\color{black}
Furthermore, previous distillation methods for diffusion and flow models rely on marginal-based losses such as Fisher divergence, these approaches do not account for the full structure of path measures. 
This limitation becomes critical in the context of Diffusion Bridge Models (DBMs), where the dynamic aspects of the forward and reverse processes play a fundamental role. 
To better motivate the need for our KL-based objective, we next discuss the conceptual differences between KL divergence of path measures and Fisher divergence, illustrating why Fisher-based objectives like those used in SiD \cite{zhou2024score} are insufficient in the general setting of bridge matching.
Consider two reverse-time diffusions $D_1$ and $D_2$ given by the same starting distribution $p(x_T)$ and SDEs:
\begin{gather}
    D_1: dx_t = v(x_t, t)dt + g^2(t)d\bar{w}_t, \quad x_T \sim p(x_T),
    \nonumber
    \\
    D_2: dx_t = \widehat{v}(x_t, t)dt + g^2(t)d\bar{w}_t, \quad x_T \sim p(x_T)
    \nonumber
\end{gather}
Let $p(x_t)$ and $\widehat{p}(x_t)$ be the corresponding marginals. Then the KL divergence and Fisher divergence are given by:
\begin{gather}
    \!\! \text{KL}(D_1||D_2) = \mathbb{E}_{t, p_t(x_t)} \!\! \left[\frac{1}{2g^2(t)}\|v(x_t, t) - \widehat{v}(x_t, t)\|^2 \right] +
    \nonumber
    \\
    \underbrace{\text{KL}(p(x_T)||\widehat{p}(x_T))}_{= 0 \text{ if } p(x_T) = \widehat{p}(x_T)},
    \label{eq:kl-expression}
    \\
    \!\! \text{D}_{\text{Fisher}}(D_1||D_2) \!=\! \mathbb{E}_{t, p(x_t)}\|\nabla_{x_t} \! \log p(x_t) - \nabla_{x_t} \! \log \widehat{p}(x_t)\|^2.
    \nonumber
\end{gather}
In SiD \cite{zhou2024score}, Fisher divergence ($\text{D}_{\text{Fisher}}$) is averaged over time and compares only marginal distributions $p(x_t)$ and $\widehat{p}(x_t)$ of path measures. However, path measures with the same marginal distributions might not be equal; thus, in general, minimizing Fisher divergence does not guarantee that $D_1 \approx D_2$ as stochastic processes. For classical diffusion models, the forward drift $f(x_t, t)$ is fixed, and reverse drifts are fully determined by score functions:
\begin{eqnarray}
    \widehat{v}(x_t, t) = f(x_t, t) - g^2(t)\nabla_{x_t} \log \widehat{p}(x_t),
    \nonumber
    \\
    \quad v(x_t, t) = f(x_t, t) - g^2(t)\nabla_{x_t} \log p(x_t).
    \nonumber
\end{eqnarray}
Substituting these into the KL expression \eqref{eq:kl-expression} shows that in this specific setting — with a fixed forward SDE — KL divergence between path measures becomes equivalent (up to a constant) to the time-averaged Fisher divergence between the marginals. This explains why Fisher-based methods like SiD \cite{zhou2024score} may succeed in this context.

However, this equivalence breaks down in the case of unconditional bridge matching. Here, the forward drift $f(x_t, t)$ is not fixed and depends on the data coupling $p(x_0, x_T)$. In turn, the forward drift $f_\theta(x_t, t)$ for the generated coupling $p_\theta(x_0, x_T)$ also depends on $\theta$. As a result, $f(x_t, t) \neq f_\theta(x_t, t)$, and the reverse drifts cannot be expressed solely in terms of marginal scores. Hence, KL divergence in the case of unconditional bridge matching is not equivalent to Fisher divergence between marginals. This difference is expected since, in the case of an unconditional diffusion bridge, one does not have a fixed forward process, which specifies the "dynamic part" of the measure. This highlights the importance of using KL divergence between path measures as a high-level objective instead of the previously used Fisher Divergence. 

\color{black}

\begin{algorithm}[h]
    \caption{Inverse Bridge Matching Distillation (IBMD)}\label{alg:ibmd}
    \SetKwInOut{Input}{Input}\SetKwInOut{Output}{Output}
    \Input{Teacher network $\widehat{x}_0^{*}: \mathbb{R}^{D} \times [0, T] \times \textcolor{MyRed}{\mathbb{R}^{D}} \rightarrow \mathbb{R}^{D}$; \\
    Bridge $q(x_t|x_0, x_T)$ used for training $x^*$; \\
    Generator network $G_{\theta}: \mathbb{R}^{D} \times \mathbb{R}^{D} \rightarrow \mathbb{R}^{D}$; \\
    Bridge network $\widehat{x}_0^{\phi}: \mathbb{R}^{D} \times[0, T] \times \textcolor{MyRed}{\mathbb{R}^{D}} \rightarrow \mathbb{R}^{D}$; \\
    Input distribution $p(x_T)$ accessible by samples; \\
    Weights function $\lambda(t): [0, T] \rightarrow \mathbb{R}^+$; \\
    Batch size $N$; Number of student iterations $K$; \\
    Number of bridge iterations $L$. \\
    }
    \Output{Learned generator $G_{\theta}$ of coupling $p_{\theta}(x_0, x_T)$ for which Bridge Matching outputs drift $v \approx v^*$.}
    // Conditioning on $\textcolor{MyRed}{x_T}$ is used only for distillation of Conditional Bridge Matching models. \\
    \For{$k = 1$ \KwTo $K$}{
        \For{$l = 1$ \KwTo $L$}{
            Sample batch $x_T \sim p(x_T)$ \\
            Sample batch of noise $z \sim \mathcal{N}(0, I)$ \\
            $x_0 \leftarrow G_{\theta}(x_T, z)$ \\
            Sample time batch $t \sim U[0, T]$ \\
            Sample batch $x_t \sim q(x_t|x_0, x_T)$ \\
            $\widehat{\mathcal{L}}_{\phi} \leftarrow \frac{1}{N}\sum_{n=1}^N \big[\lambda(t)||\widehat{x}_0^{\phi}(x_t, t, \textcolor{MyRed}{x_T}) - x_0||^2\big]_{n}$ \\
            Update $\phi$ by using $\frac{\partial \widehat{\mathcal{L}}_{\phi}}{\partial \phi}$
        }
        Sample batch $x_T \sim p(x_T)$ \\
        Sample batch of noise $z \sim \mathcal{N}(0, I)$ \\
        $x_0 \leftarrow G_{\theta}(x_T, z)$ \\
        Sample time batch $t \sim U[0, T]$ \\
        Sample batch $x_t \sim q(x_t|x_0, x_T)$ \\
        $\widehat{\mathcal{L}}_{\theta} \! \leftarrow \!\! \frac{1}{N}\! \sum_{n=1}^N \! \big[\lambda(t)||\widehat{x}_0^{*}(x_t, t, \textcolor{MyRed}{x_T}) -  x_0||^2 - \\ 
        \lambda(t)||\widehat{x}_0^{\phi}(x_t, t, \textcolor{MyRed}{x_T})  - x_0||^2 \big]_{n}$ \\
        Update $\theta$ by using $\frac{\partial \widehat{\mathcal{L}}_{\theta}}{\partial \theta}$
    }
\end{algorithm}
\vspace{-3mm}
\section{Experiments}\label{sec:experiments}
\vspace{-2mm}
This section highlights the applicability of our IBMD distillation method in both \textit{unconditional} and \textit{conditional} settings. To demonstrate this, we conducted experiments utilizing pretrained \textit{unconditional} models used in I$^2$SB paper \citep{liu20232}. Then we evaluated IBMD in \textit{conditional} settings using DDBM \citep{zhou2024denoising} setup (\wasyparagraph\ref{sec:ddbm experiments}). For clarity, we denote our models as \textbf{IBMD-DDBM} and \textbf{IBMD-I$^2$SB}, indicating that the teacher model is derived from DDBM or I$^2$SB framework, respectively. 
We provide all the \underline{technical details} in Appendix~\ref{app:experimental-details}.


\vspace{-2mm}
\subsection{Distillation of I2SB (5 setups)}\label{sec:i2sb experiments}
\vspace{-2mm}
Since known distillation and acceleration techniques are designed for the conditional models, there is no clear baseline for comparison. Thus, this section aims to demonstrate that our distillation technique significantly decreases NFE required to obtain the same quality of generation.

\textbf{Experimental Setup.} To test our approach for unconditional models, we consider models trained and published in I$^2$SB paper \citep{liu20232}, specifically (a) two models for the 4x super-resolution with bicubic and pool kernels, (b) two models for JPEG restoration using quality factor QF$=5$ and QF$=10$, and (c) a model for center-inpainting with a center mask of size $128 \times 128$ all of which were trained on ImageNet $256\times256$ dataset \citep{deng2009imagenet}. 

For all the setups we use the same train part of ImageNet dataset, which was used to train the used models. 
For the evaluation we follow the same protocol used in the I$^2$SB paper, i.e. use the full validation subset of ImageNet for super-resolution task and the $10'000$ subset of validation for other tasks. 
We report the same FID \cite{heusel2017gans} and Classifier Accuracy (CA) using pre-trained ResNet50 model metrics used in the I$^2$SB paper. We present our results in Table~\ref{tab:sr-bicubic}, Table~\ref{tab:sr-pool}, Table~\ref{tab:jpeg-5}, Table~\ref{tab:jpeg-10} and Table~\ref{tab:image_inpainting_results}. We provide the \underline{uncurated samples} for all setups in Appendix~\ref{app:additional-results}. 
\begin{table}[ht]
\centering
\caption{Results on the image super-resolution task. Baseline results are taken from I$^2$SB \citep{liu20232}.}
\label{tab:sr-bicubic}
\begin{tabular}{lccc}
\toprule
\textbf{4$\times$ super-resolution (bicubic)} & \multicolumn{3}{c}{ImageNet (256 $\times$ 256)} \\ \cmidrule(lr){2-4} 
 & \textbf{NFE} & \textbf{FID $\downarrow$} & \textbf{CA $\uparrow$} \\ \midrule
DDRM \cite{kawar2022denoising} & 20 & 21.3 & 63.2 \\
DDNM \citep{wang2023zeroshot} & 100 & 13.6 & 65.5 \\
$\Pi$GDM \cite{song2023pseudoinverse} & 100 & 3.6 & 72.1 \\
ADM \citep{dhariwal2021diffusion} & 1000 & 14.8 & 66.7 \\
CDSB \citep{shi2022conditional} & 50 & 13.6 & 61.0 \\
I$^2$SB \cite{liu20232} & 1000 & 2.8 & 70.7 \\ 
\midrule
IBMD-I$^2$SB (\textbf{Ours}) & 1 & \color{black} \textbf{2.6} & 72.3 \color{black} \\ 
\bottomrule
\end{tabular}
\vspace{-6.25mm}
\end{table}

\begin{table}[ht]
\centering
\caption{Results on the image JPEG restoration task with QF=5. Baseline results are taken from I$^2$SB \citep{liu20232}.}
\label{tab:jpeg-5}
\begin{tabular}{lccc}
\toprule
\textbf{JPEG restoration, QF$=5$.} & \multicolumn{3}{c}{ImageNet (256 $\times$ 256)} \\ \cmidrule(lr){2-4} 
 & \textbf{NFE} & \textbf{FID $\downarrow$} & \textbf{CA $\uparrow$} \\ \midrule
DDRM \cite{kawar2022denoising} & 20 & 28.2 & 53.9 \\
$\Pi$GDM \cite{song2023pseudoinverse} & 100 & 8.6 & 64.1 \\
Palette \citep{saharia2022palette} & 1000 & 8.3 & 64.2 \\
CDSB \citep{shi2022conditional} & 50 & 38.7 & 45.7 \\
I$^2$SB \cite{liu20232} & 1000 & \textbf{4.6} & 67.9 \\ 
I$^2$SB \cite{liu20232} & 100 & 5.4 & 67.5 \\ 
\midrule
IBMD-I$^2$SB (\textbf{Ours}) & 1 & \color{black} \underline{5.2} & 66.6 \color{black} \\ 
\bottomrule
\end{tabular}
\vspace{-5mm}
\end{table}

\begin{table}[t]
\centering
\caption{Results on the image super-resolution task. Baseline results are taken from I$^2$SB \citep{liu20232}.}
\label{tab:sr-pool}
\begin{tabular}{lccc}
\toprule
\textbf{4$\times$ super-resolution (pool)} & \multicolumn{3}{c}{ImageNet (256 $\times$ 256)} \\ \cmidrule(lr){2-4} 
 & \textbf{NFE} & \textbf{FID $\downarrow$} & \textbf{CA $\uparrow$} \\ \midrule
DDRM \cite{kawar2022denoising} & 20 & 14.8 & 64.6 \\
DDNM \citep{wang2023zeroshot} & 100 & 9.9 & 67.1 \\
$\Pi$GDM \cite{song2023pseudoinverse} & 100 & 3.8 & 72.3 \\
ADM \citep{dhariwal2021diffusion} & 1000 & 3.1 & 73.4 \\
CDSB \citep{shi2022conditional} & 50 & 13.0 & 61.3 \\
I$^2$SB \cite{liu20232} & 1000 & 2.7 & 71.0 \\ 
\midrule
IBMD-I$^2$SB (\textbf{Ours}) & 1 & \textbf{2.5} & 72.5 \\ 
\bottomrule
\end{tabular}
\vspace{-5mm}
\end{table}

\begin{table}[t]
\centering
\caption{Results on the image JPEG restoration task with QF=10. Baseline results are taken from I$^2$SB \citep{liu20232}.}
\label{tab:jpeg-10}
\begin{tabular}{lccc}
\toprule
\textbf{JPEG restoration, QF$=10$.} & \multicolumn{3}{c}{ImageNet (256 $\times$ 256)} \\ \cmidrule(lr){2-4} 
 & \textbf{NFE} & \textbf{FID $\downarrow$} & \textbf{CA $\uparrow$} \\ \midrule
DDRM \cite{kawar2022denoising} & 20 & 16.7 & 64.7 \\
$\Pi$GDM \cite{song2023pseudoinverse} & 100 & 6.0 & 71.0 \\
Palette \citep{saharia2022palette} & 1000 & 5.4 & 70.7 \\
CDSB \citep{shi2022conditional} & 50 & 18.6 & 60.0 \\
I$^2$SB \cite{liu20232} & 1000 & \textbf{3.6} & 72.1 \\ 
I$^2$SB \cite{liu20232} & 100 & 4.4 & 71.6 \\ 
\midrule
IBMD-I$^2$SB (\textbf{Ours}) & 1 & \color{black} \underline{3.7} \color{black} & 72.4 \\ 
\bottomrule
\end{tabular}
\vspace{-5mm}
\end{table}

\begin{table*}[t]
\centering
\caption{Results on the Image-to-Image Translation Task (\underline{Training Sets}). Methods are grouped by NFE ($> 2$, $2$, $1$), with the best metrics bolded in each group. Baselines results are taken from CDBM.}
\label{tab:ddbm train results}
\begin{tabular}{lccc|cc}
\hline
 & \multirow{2}{*}{\textbf{NFE}} & \multicolumn{2}{c|}{Edges $\rightarrow$ Handbags (64 $\times$ 64)} & \multicolumn{2}{c}{DIODE-Outdoor (256 $\times$ 256)} \\ \cline{3-6} 
                &   & \textbf{FID} $\downarrow$ & \textbf{IS} $\uparrow$ & \textbf{FID} $\downarrow$ & \textbf{IS} $\uparrow$ \\ \hline
DDIB \cite{su2022dual}   & $\geq 40$ & 186.84           & 2.04          & 242.3            & 4.22          \\
SDEdit \cite{meng2021sdedit} & $\geq 40$ & 26.5             & 3.58          & 31.14            & 5.70          \\
Rectified Flow \cite{liu2022flow} & $\geq 40$ & 25.3       & 2.80          & 77.18            & 5.87          \\
$\text{I}^2$SB \cite{liu20232} & $\geq 40$ & 7.43             & 3.40          & 9.34             & 5.77          \\
DBIM \cite{zheng2024diffusion} & 50 & 1.14         & \textbf{3.62} & 3.20             &  \textbf{6.08} \\
DBIM \cite{zheng2024diffusion} & 100 & \textbf{0.89}        & \textbf{3.62} &  \textbf{2.57}   & 6.06           \\ \hline
CBD \cite{he2024consistency} & \multirow{3}{*}{2}  & 1.30           & 3.62          & 3.66             & 6.02           \\
CBT \cite{he2024consistency} &  & 0.80  & 3.65 &  \textbf{2.93}    &  \textbf{6.06}  \\
IBMD-DDBM (\textbf{Ours}) &   & \textbf{0.67}  & \textbf{3.69} &  3.12   &     5.92     \\ \hline
Pix2Pix \cite{isola2017image} & \multirow{2}{*}{1} & 74.8             & \textbf{4.24} & 82.4             & 4.22          \\
IBMD-DDBM (\textbf{Ours}) &  & \textbf{1.26}  & 3.66 &  \textbf{4.07}   &  \textbf{5.89}  \\ \hline
\end{tabular}
\vspace{-4mm}
\end{table*}


\begin{table}[t]
\centering
\vspace{-2mm}
\caption{Results on the Image Inpainting Task. Methods are grouped by NFE ($> 4$, $4$, $2$, $1$), with the best metrics bolded in each group. Baselines results are taken from CDBM.}
\label{tab:image_inpainting_results}
\begin{tabular}{lccc}
\hline
\multirow{2}{*}{\textbf{Inpainting, Center (128 $\times$ 128)}} & \multicolumn{3}{c}{ImageNet (256 $\times$ 256)} \\ \cline{2-4} 
                             & \textbf{NFE}    & \textbf{FID} $\downarrow$  & \textbf{CA} $\uparrow$ \\ \hline
DDRM \cite{kawar2022denoising}     & 20     & 24.4             & 62.1          \\
$\Pi$GDM \cite{song2023pseudoinverse}   & 100    & 7.3              &  \textbf{72.6}          \\
DDNM \cite{wang2023zeroshot}     & 100    & 15.1             & 55.9          \\
Palette \cite{saharia2022palette} & 1000  & 6.1              & 63.0          \\
I$^2$SB \cite{liu20232}   & 10   & 5.4              & 65.97          \\
DBIM \cite{zheng2024diffusion}                & 50    & 3.92             & 72.4          \\
DBIM \cite{zheng2024diffusion}                & 100    & \textbf{3.88}             & \textbf{72.6}          \\ \hline
CBD \cite{he2024consistency}                & \multirow{4}{*}{4}    & 5.34             & 69.6          \\
CBT \cite{he2024consistency}                &     & 4.77             & 70.3          \\
IBMD-I$^2$SB (\textbf{Ours})                 &     & 5.1             & 70.3          \\
IBMD-DDBM (\textbf{Ours})              &      & \textbf{4.03}             & \textbf{72.2}          \\ \hline
CBD \cite{he2024consistency}                & \multirow{4}{*}{2}    & 5.65             & 69.6          \\ 
CBT \cite{he2024consistency}                &     & 5.34             & 69.8          \\ 
IBMD-I$^2$SB (\textbf{Ours})                &     & 5.3             &  65.7          \\ 
IBMD-DDBM (\textbf{Ours})              &      & \textbf{4.23}             & \textbf{72.3}          \\ \hline
IBMD-I$^2$SB (\textbf{Ours})              & \multirow{2}{*}{1}     &  6.7          & 65.0          \\
IBMD-DDBM (\textbf{Ours})              &      & \textbf{5.87}             & \textbf{70.6}         \\ \hline
\end{tabular}
\vspace{-7mm}
\end{table}

\textbf{Results.} For both super-resolution tasks (see Table~\ref{tab:sr-bicubic}, Table~\ref{tab:sr-pool}), our $1$-step distilled model outperformed teacher model inference using all $1000$ steps used in the training. 
Note that our model does not use the clean training target data $p(x_0)$, only the corrupted $p(x_T)$, hence this improvement is not due to additional training using paired data. We hypothesize that it is because the teacher model introduces approximation error during many steps of sampling, which may accumulate. 
For both JPEG restoration (see Table~\ref{tab:jpeg-5}, Table~\ref{tab:jpeg-10}), our 1-step distilled generator provides the quality of generation close to the teacher model and achieves around 100x time acceleration. For the inpainting problem (see Table~\ref{tab:image_inpainting_results}), we present the results for $1,2$ and $4$ steps distilled generator. Our 2 and 4-step generators provide a quality similar to the teacher I$^2$SB model, however, there is still some gap for the 1-step model. These models provide around $5$x time acceleration. We hypothesize that this setup is harder since it requires to generate the entire center fragment from scratch, while in other tasks, there is already some good approximation given by corrupted images.
\vspace{-2mm}
\subsection{Distillation of DDBM (3 setups)} \label{sec:ddbm experiments}
\vspace{-2mm}
This section addresses two primary objectives: (1) demonstrating the feasibility of conditional model distillation within our framework and (2) comparing with the CDBM \citep{he2024consistency} - a leading approach in Conditional Bridge Matching distillation, presented into different models: CBD (consistency distillation) and CBT (consistency training).

\textbf{Experimental Setup.} 
For evaluation, we use the same setups used in competing methods \citep{he2024consistency, zheng2024diffusion}.
For the image-to-image translation task, we utilize the Edges→Handbags dataset \cite{isola2017image} with a resolution of $64 \times 64$ pixels and the DIODE-Outdoor dataset \cite{vasiljevic2019diode} with a resolution of $256 \times 256$ pixels. For these tasks, we report FID and Inception Scores (IS) \cite{barratt2018note}.
For the image inpainting task, we use the same setup of center-inpainting as before.


\textbf{Results.} We utilized the same teacher model checkpoints and
as in CDBM. We present the quantitative and qualitative results of IBMD on the image-to-image translation task in Table \ref{tab:ddbm train results} and in Figures \ref{fig:e2h train}, \ref{fig:diode train} respectively. 
The competing methods, DBIM \citep[Section 4.1]{zhou2024denoising} and CDBM \citep[Section 3.4]{he2024consistency}, cannot use single-step inference due to the singularity at the starting point $x_T$. 


We trained our IBMD with $1$ and $2$ NFEs on the Edges\textrightarrow Handbags dataset. We surpass CDBM at $2$ NFE, outperform the teacher at $100$ NFE, and achieve performance comparable to the teacher at $50$ NFE with $1$ NFE, resulting in a $50\times$ acceleration.
For the DIODE-Outdoor setup, we trained IBMD with $1$ and $2$ NFEs. We surpassed CBD in FID at $2$ NFE, achieving results comparable to CBT with a slight drop in performance and maintaining strong performance at $1$ NFE with minor quality reductions.

For image inpainting, we show in Table \ref{tab:image_inpainting_results} quantitative results and in Figure \ref{fig:inpainting ddbm} the quantitative results.
We train IBMD with $4$ NFE in this setup. 
It outperforms CBD and CBT at $4$ NFE with a significant gap, surpassing both at $2$ NFE and maintaining strong performance at $1$ NFE while achieving teacher-level results (with $50$ NFE) with a $12.5\times$ speedup.

\textbf{Concerns regarding the evaluation protocol used in prior works.}
For Edges-Handbags and DIODE-Outdoor setups, we follow the evaluation protocol originally introduced in DDBM \citep{zhou2024denoising} and later used in works on acceleration of DDBM \cite{zheng2024diffusion, he2024consistency}.
For some reason, this protocol implies evaluation of the train set. Furthermore, test sets of these datasets consist of a tiny fraction of images (around several hundred), making the usage of standard metrics like FID challenging due to high statistical bias or variance of their estimation. Still, to assess the quality of the distilled model on the test sets, we provide the \underline{uncurated samples} produced by our distill model and teacher model on these sets in Figures \ref{fig:e2h test} and \ref{fig:diode test} in Appendix~\ref{app:additional-results}. We also provide the \underline{uncurated samples} on the train part in Figures \ref{fig:e2h train} and \ref{fig:diode train} to compare models' behavior on train and test sets. From these results, we see that the teacher model exhibits overfitting on both setups, e.g., it produces exactly the same images as corresponding reference images. In turn, on the test sets, teacher models work well for the handbag setups, while on the test set of DIODE images, it exhibits mode collapse and produces gray images. Nevertheless, our distilled model shows exactly the same behavior in both sets, i.e., our IBMD approach precisely distills the teacher model as expected.







\vspace{-3mm}
\section{Discussion}
\vspace{-2mm}
\textbf{Potential impact.}
DBMs 
are used for data-to-data translation in different 
domains, including images, audio, and biological data. Our distillation technique provides a universal and efficient way to address the long inference of DBMs, making them more affordable in practice.

\textbf{Limitations.} 
Our method alternates between learning an additional bridge model and updating the student, which may be computationally expensive. Moreover, the student optimization requires backpropagation through the teacher, additional bridge, and the generator network, making it $3$x time more memory expensive than training the teacher.

\vspace{-3mm}
\section*{Acknowledgements}
\vspace{-2mm}
The work was supported by the grant for research centers in the field of AI provided by the Ministry of Economic Development of the Russian Federation in accordance with the agreement 000000C313925P4F0002 and the agreement with Skoltech №139-10-2025-033.

\vspace{-4mm}
\section*{Impact Statement}
\vspace{-2mm}
This paper presents work whose goal is to advance the field of 
Machine Learning. There are many potential societal consequences of our work, none which we feel must be specifically highlighted here.

\newpage

\bibliography{references}
\bibliographystyle{icml2025}

\newpage
\appendix
\onecolumn

\section{Proofs}\label{app:proofs}
Since all our theorems, propositions and proofs for the inverse Bridge Matching problems which is formulated for the already trained teacher model using some diffusion bridge, we assume all corresponding assumptions used in Bridge Matching. Extensive overview of them can be found in \citep[Appendix C]{shi2023diffusion}.

\begin{proof}[Proof of Proposition~\ref{thm:inverse-bm}]
Since both $\text{BM}(\Pi_{\theta})$ and $M^*$ given by reverse-time SDE and the same distribution $p_T(x_T)$ the KL-divergence expressed in the tractable form using the disintegration and Girsanov theorem \citep{vargas2021solving, pavon1991free}:
\begin{gather}   
    \text{KL}(\text{BM}(\Pi_{\theta})||M^*) = \mathbb{E}_{x_t, t}\big[g^2(t)||v(x_t, t) - v^*(x_t, t)||^2\big],
    \nonumber
    \\
    (x_0, x_T) \sim p_{\theta}(x_0, x_T), t \sim U([0, T]),  x_t \sim q(x_t|x_0, x_T).
    \nonumber
\end{gather}
The expectation is taken over the marginal distribution $p(x_t)$ of $\Pi_{\theta}$ since it is the same as for $\text{BM}(\Pi_{\theta})$ \citep[Proposition 2]{shi2023diffusion}. In turn, the drift $v(x_t, t)$ is the drift of Bridge Matching using $\Pi_{\theta}$, i.e. $\text{BM}(\Pi_{\theta})$:
\begin{gather}
    v = \argmin_{v'} \mathbb{E}_{x_t, t, x_0} \big[\| v'(x_t, t) - \nabla_{x_t} \log q(x_t|x_0) \|^2 \big], 
    \nonumber
    \\
    \quad (x_0, x_T) \sim p_{\theta}(x_0, x_T), t \sim U([0, T]),  x_t \sim q(x_t|x_0, x_T).
    \nonumber
\end{gather}
Combining this, the inverse problem can be expressed in a more tractable form:
\begin{gather}
    \min_{\theta} \mathbb{E}_{x_t, t} \big[g^2(t)||v(x_t, t) - v^*(x_t, t)||^2\big], \quad \text{s.t.}
    \label{eq:final-inverse-problem}
    \\
    v = \argmin_{v'} \mathbb{E}_{x_t, t, x_0} \big[\|v'(x_t, t) -  \nabla_{x_t} \log q(x_t|x_0) \|^2 \big] dt, 
    \nonumber
    \\
    (x_0, x_T) \sim p_{\theta}(x_0, x_T), \text{ } t \sim U([0, T]), \text{ } x_t \sim q(x_t|x_0, x_T).
    \nonumber
\end{gather}
We can add positive valued weighting function $\lambda(t)$ for the constraint:
$$
v = \argmin_{v'} \mathbb{E}_{x_t, t, x_0} \big[\lambda(t)\|v'(x_t, t) -  \nabla_{x_t} \log q(x_t|x_0) \|^2 \big] dt, 
$$
since it is the MSE regression and its solution is conditional expectation for any weights given by:
$$
    v(x_t, t) = \mathbb{E}_{x_0|x_t, t}\big[\nabla_{x_t} \log q(x_t|x_0)].
$$
We can add positive valued weighting function $\lambda(t)$ for the main functional:
$$
    \mathbb{E}_{x_t, t} \big[\lambda(t)||v(x_t, t) - v^*(x_t, t)||^2\big],
$$
since it does not change the optimum value (which is equal to $0$) and optimal solution, which is the mixture of bridges with the same drift as the teacher model.
\end{proof}

\begin{proof}[Proof of Theorem~\ref{thm:main-theorem}]
Consider inverse bridge matching optimization problem:
\begin{gather}
    \min_{\theta} \mathbb{E}_{x_t, t} \big[\lambda(t)||v(x_t, t) - v^*(x_t, t)||^2\big], \quad \text{s.t.}
    \\
    v = \argmin_{v'} \mathbb{E}_{x_t, t, x_0} \big[\|v'(x_t, t) -  \nabla_{x_t} \log q(x_t|x_0) \|^2 \big], 
    \nonumber
    \\
    (x_0, x_T) \sim p_{\theta}(x_0, x_T), \text{ } t \sim U([0, T]), \text{ } x_t \sim q(x_t|x_0, x_T).
    \nonumber
\end{gather}

First, note that since $v = \argmin_{v'} \mathbb{E}_{x_t, t, x_0} \big[\|v'(x_t, t) -  \nabla_{x_t} \log q(x_t|x_0) \|^2 \big]$, i.e. minimizer of MSE functional it is given by conditional expectation as:
\begin{eqnarray}
\label{eq: obvious}
    v(x_t, t) = \mathbb{E}_{x_0|x_t, t} \big[\nabla_{x_t} \log q(x_t|x_0) | x_t, t \big].
\end{eqnarray}
Then note that:
\begin{gather}
    \min_{v'} \mathbb{E}_{x_t, t, x_0} \big[\lambda(t) \|v'(x_t, t) -  \nabla_{x_t} \log q(x_t|x_0) \|^2 \big] = 
    \nonumber
    \\
    \mathbb{E}_{x_t, t, x_0} \big[\lambda(t)\|v(x_t, t) -  \nabla_{x_t} \log q(x_t|x_0) \|^2 \big] = 
    \nonumber
    \\
    \underbrace{\mathbb{E}_{x_t, t, x_0} \big[\lambda(t) ||v(x_t, t)||^2\big]}_{\mathbb{E}_{x_t, t} \big[\lambda(t) ||v(x_t, t)||^2\big]} - 2 \mathbb{E}_{x_t, t, x_0} \big[ \lambda(t) \langle v(x_t,t), \nabla_{x_t} \log q(x_t|x_0)\rangle \big] +  \mathbb{E}_{x_t, t, x_0} \big[\lambda(t) ||\nabla_{x_t} \log q(x_t|x_0)||^2\big] = 
    \nonumber
    \\
    \mathbb{E}_{x_t, t} \big[\lambda(t)||v(x_t, t)||^2\big] - 2 \mathbb{E}_{x_t, t} \Big[ \lambda(t) \left\langle v(x_t,t), \underbrace{\mathbb{E}_{x_0|x_t, t} \big[\nabla_{x_t} \log q(x_t|x_0)\big]}_{=v(x_t,t)}\right\rangle\Big] +  \mathbb{E}_{x_t, t, x_0} \big[\lambda(t)||\nabla_{x_t} \log q(x_t|x_0)||^2\big] =
    \nonumber
    \\
    \mathbb{E}_{x_t, t} \big[\lambda(t)||v(x_t, t)||^2\big] - 2 \mathbb{E}_{x_t, t} \big[\lambda(t)||v(x_t,t)||^2\big] +  \mathbb{E}_{x_t, t, x_0} \big[\lambda(t)||\nabla_{x_t} \log q(x_t|x_0)||^2\big] = 
    \nonumber
    \\
    - \mathbb{E}_{x_t, t}\big[\lambda(t)||v(x_t,t)||^2\big] +  \mathbb{E}_{x_t, t, x_0}\big[\lambda(t)||\nabla_{x_t} \log q(x_t|x_0)||^2\big].
\end{gather}
Hence, we derive that
\begin{eqnarray}
    \mathbb{E}_{x_t, t} \big[\lambda(t)||v(x_t,t)||^2 \big]= \mathbb{E}_{x_t, t, x_0} \big[\lambda(t)||\nabla_{x_t} \log q(x_t|x_0)||^2 \big]- \min_{v'} \mathbb{E}_{x_t, t, x_0} \big[\lambda(t)\|v'(x_t, t) -  \nabla_{x_t} \log q(x_t|x_0) \|^2 \big].
    \nonumber
\end{eqnarray}
Now we use it to reformulate the initial objective:
\begin{gather}
    \mathbb{E}_{x_t, t} \big[\lambda(t)||v(x_t, t) - v^*(x_t, t)||^2\big] = 
    \nonumber
    \\
    \mathbb{E}_{x_t, t} \big[\lambda(t)||v(x_t, t)||^2\big] - 2 \mathbb{E}_{x_t, t} \big[\lambda(t) \left<v(x_t,t), v^*(x_t, t)\right>\big] + \mathbb{E}_{x_t, t} \big[\lambda(t)||v^*(x_t, t)||^2\big] =
    \nonumber
    \\
    \underbrace{\mathbb{E}_{x_t, t, x_0} \big[\lambda(t)||\nabla_{x_t} \log q(x_t|x_0)||^2] - \min_{v'} \mathbb{E}_{x_t, t, x_0} \big[\lambda(t)\|v'(x_t, t) -  \nabla_{x_t} \log q(x_t|x_0) \|^2 \big]}_{=\mathbb{E}_{x_t, t} \big[\lambda(t)||v(x_t,t)||^2\big]} - 
    \nonumber
    \\
    2 \mathbb{E}_{x_t, t} \big[\lambda(t)\left\langle v(x_t,t), v^*(x_t, t)\right\rangle\big] + \mathbb{E}_{x_t, t} \big[\lambda(t)||v^*(x_t, t)||^2\big] =
    \nonumber
    \\
    \mathbb{E}_{x_t, t, x_0} \big[\lambda(t)|| \nabla_{x_t} \log q(x_t|x_0)||^2\big] - 2 \mathbb{E}_{x_t, t} \big[\lambda(t)\left\langle v(x_t,t), v^*(x_t, t)\right\rangle\big] + \underbrace{\mathbb{E}_{x_t, t} \big[\lambda(t)||v^*(x_t, t)||^2\big]}_{\mathbb{E}_{x_t, t, x_0} \big[\lambda(t)||v^*(x_t, t)||^2\big]} - 
    \nonumber
    \\
    \min_{v'} \mathbb{E}_{x_t, t, x_0} \big[\lambda(t)\|v'(x_t, t) -  \nabla_{x_t} \log q(x_t|x_0) \|^2 \big] 
    \nonumber
\end{gather}
Therefore, we get:
\begin{gather}
    \mathbb{E}_{x_t, t} \big[\lambda(t)||v(x_t, t) - v^*(x_t, t)||^2\big] = \nonumber\\
    \mathbb{E}_{x_t, t, x_0} \big[\lambda(t)|| \nabla_{x_t} \log q(x_t|x_0)||^2\big] - 2 \mathbb{E}_{x_t, t} \big[\lambda(t)\left\langle v(x_t,t), v^*(x_t, t)\right\rangle\big] + \mathbb{E}_{x_t, t, x_0} \big[\lambda(t)||v^*(x_t, t)||^2\big] - 
    \nonumber
    \\
    \min_{v'} \mathbb{E}_{x_t, t, x_0} \big[\lambda(t)\|v'(x_t, t) -  \nabla_{x_t} \log q(x_t|x_0) \|^2 \big]
    \nonumber
\end{gather}
To complete the proof, we use the relation $v(x_t, t) = \mathbb{E}_{x_0|x_t, t} \big[\nabla_{x_t} \log q(x_t|x_0) | x_t, t \big]$ from Equation \ref{eq: obvious}. Integrating these components, we arrive at the final result:
\begin{gather}
    \mathbb{E}_{x_t, t} \big[\lambda(t)||v(x_t, t) - v^*(x_t, t)||^2\big] = 
    \nonumber
    \\
    \mathbb{E}_{x_t, t, x_0} \big[\lambda(t)|| \nabla_{x_t} \log q(x_t|x_0)||^2\big] - 2 \mathbb{E}_{x_t, t} \big[\lambda(t)\left\langle \mathbb{E}_{x_0|x_t, t} \big[\nabla_{x_t} \log q(x_t|x_0) | x_t, t \big], v^*(x_t, t)\right\rangle\big] + \mathbb{E}_{x_t, t, x_0} \big[\lambda(t)||v^*(x_t, t)||^2\big] - 
    \nonumber
    \\
    \min_{v'} \mathbb{E}_{x_t, t, x_0} \big[\lambda(t)\|v'(x_t, t) -  \nabla_{x_t} \log q(x_t|x_0) \|^2 \big] =
    \nonumber
    \\
    \mathbb{E}_{x_t, t, x_0} \big[\lambda(t)|| \nabla_{x_t} \log q(x_t|x_0)||^2\big] - 2 \mathbb{E}_{x_t, t, x_0} \big[\lambda(t)\left\langle \nabla_{x_t} \log q(x_t|x_0), v^*(x_t, t)\right\rangle\big] + \mathbb{E}_{x_t, t, x_0} \big[\lambda(t)||v^*(x_t, t)||^2\big] - 
    \nonumber
    \\
    \min_{v'} \mathbb{E}_{x_t, t, x_0} \big[\lambda(t)\|v'(x_t, t) -  \nabla_{x_t} \log q(x_t|x_0) \|^2 \big] =
    \nonumber
    \\
    \mathbb{E}_{x_t, t, x_0} \big[\lambda(t)\|v^*(x_t, t) -  \nabla_{x_t} \log q(x_t|x_0) \|^2 \big] - \min_{v'} \mathbb{E}_{x_t, t, x_0} \big[\lambda(t)\|v'(x_t, t) -  \nabla_{x_t} \log q(x_t|x_0) \|^2 \big].
    \nonumber
\end{gather}
\end{proof}

\begin{proof}[Proof of Proposition~\ref{thm:reparametrized-main-theorem}]
Consider the problem from Proposition~\ref{thm:main-theorem}:
\begin{gather}
    \min_{\theta} \Big[\mathbb{E}_{x_t, t, x_0} \big[\lambda(t) \| v^*(x_t, t) - \nabla_{x_t} \log q(x_t|x_0) \|^2 \big] -
    \min_{\phi} \mathbb{E}_{x_t, t, x_0} \big[\lambda(t) \| v_{\phi}(x_t, t) -\nabla_{x_t} \log q(x_t|x_0) \|^2 \big] \Big],
    \nonumber
\end{gather}

For the priors with the drift $f(t)x$ the regression target is $\nabla_{x_t}\log q(x_t|x_0) = -\frac{x_t - \alpha_t x_0}{\sigma_t^2}$. 
Hence one can use the parametrization $v(x_t, t) = -\frac{x_t - \alpha_t \widehat{x}_0(x_t, t)}{\sigma_t^2}$
We use reparameterization of both $v^*$ and $v_{\phi}$ given by:
$$
    v^*(x_t, t) = -\frac{x_t - \alpha_t \widehat{x}^*_0(x_t, t)}{\sigma_t^2}, \quad
    v_{\phi}(x_t, t) = -\frac{x_t - \alpha_t \widehat{x}^{\phi}_0(x_t, t)}{\sigma_t^2}
$$
and get:
\begin{gather}
    \min_{\theta} \Big[\mathbb{E}_{x_t, t, x_0} \big[\lambda(t) \| v^*(x_t, t) - \nabla_{x_t} \log q(x_t|x_0) \|^2 \big] -
    \min_{\phi} \mathbb{E}_{x_t, t, x_0} \big[\lambda(t) \| v_{\phi}(x_t, t) -\nabla_{x_t} \log q(x_t|x_0) \|^2 \big] \Big] =
    \nonumber
    \\
    \min_{\theta} \Big[\mathbb{E}_{x_t, t, x_0} \big[\underbrace{\lambda(t)\frac{\alpha_t^2}{\sigma_t^4}}_{\eqdef \lambda'(t)} \| \widehat{x}_0^*(x_t, t) - x_0 \|^2 \big] -
    \min_{\phi} \mathbb{E}_{x_t, t, x_0} \big[\underbrace{\lambda(t)\frac{\alpha_t^2}{\sigma_t^4}}_{\eqdef \lambda'(t)} \|\widehat{x}^{\phi}_0(x_t, t) - x_0 \|^2 \big] \Big] =
    \nonumber
    \\
    \min_{\theta} \Big[\mathbb{E}_{x_t, t, x_0} \big[\lambda'(t) \| \widehat{x}_0^*(x_t, t) - x_0 \|^2 \big] -
    \min_{\phi} \mathbb{E}_{x_t, t, x_0} \big[\lambda'(t) \| \widehat{x}^{\phi}_0(x_t, t) - x_0 \|^2 \big] \Big],
    \nonumber
\end{gather}
where $\lambda'(t)$ is just another positive weighting function.
\end{proof}

\begin{proof}[Proof of Theorem~\ref{thm:conditional-inverse-problem}]
    In a fully analogical way, as for the unconditional case we consider the set of the Inverse Bridge Matching problems indexes by $\textcolor{MyRed}{x_T}$:
    \begin{gather}
    \big\{\min_{\theta} \big[\text{KL}(\text{BM}(\Pi_{\theta| \textcolor{MyRed}{x_T}})||M^{*}_{|\textcolor{MyRed}{x_T}})\big]\big\}_{\textcolor{MyRed}{x_T}},
    \nonumber
    \end{gather}
    where $M^{*}_{|\textcolor{MyRed}{x_T}}$ is a result of Bridge Matching conditioned on $\textcolor{MyRed}{x_T}$ and $\Pi_{\theta| \textcolor{MyRed}{x_T}}$ is a Mixture of Bridges for each $x_T$ constructed using bridge $q(x_t|x_0, x_T)$ and coupling $p_{\theta}(x_0|x_T)\delta_{x_T}(x)$. 
    
     By employing the same reasoning as in the proof of Proposition~\ref{thm:inverse-bm}, the inverse problem can be reformulated as follows:
    \begin{gather}
    \min_{\theta} \mathbb{E}_{x_t, t, \textcolor{MyRed}{x_T}} \big[g^2(t)||v(x_t, t, \textcolor{MyRed}{x_T}) - v^*(x_t, t, \textcolor{MyRed}{x_T})||^2\big], \quad \text{s.t.}
    \nonumber
    \\
    v = \argmin_{v'} \mathbb{E}_{x_t, t, x_0, \textcolor{MyRed}{x_T}} \big[\|v'(x_t, t, \textcolor{MyRed}{x_T}) -  \nabla_{x_t} \log q(x_t|x_0) \|^2 \big] dt, 
    \nonumber
    \\
    (x_0, x_T) \sim p_{\theta}(x_0, x_T), \text{ } t \sim U([0, T]), \text{ } x_t \sim q(x_t|x_0, x_T).
    \nonumber
\end{gather}
    Following the proof of Theorem~\ref{thm:main-theorem}, we obtain a tractable formulation incorporating a weighting function:
    \begin{gather}
    \min_{\theta} \Big[\mathbb{E}_{x_t, t, x_0, \textcolor{MyRed}{x_T}} \big[\lambda(t) \| v^*(x_t, t, \textcolor{MyRed}{x_T}) - \nabla_{x_t} \log q(x_t|x_0) \|^2 \big] -
    \nonumber
    \\
    \min_{\phi} \mathbb{E}_{x_t, t, x_0, \textcolor{MyRed}{x_T}} \big[\lambda(t) \| v_{\phi}(x_t, t, \textcolor{MyRed}{x_T}) -\nabla_{x_t} \log q(x_t|x_0) \|^2 \big] \Big].
    \nonumber
\end{gather}
    Utilizing the reparameterization under additional conditions (\wasyparagraph\ref{sec:practical-aspects-bm}), we obtain the following representations: 
    $$
        v^*(x_t, t, \textcolor{MyRed}{x_T}) = -\frac{x_t - \alpha_t \widehat{x}^*_0(x_t, t, \textcolor{MyRed}{x_T})}{\sigma_t^2}, \quad
        v_{\phi}(x_t, t, \textcolor{MyRed}{x_T}) = -\frac{x_t - \alpha_t \widehat{x}^{\phi}_0(x_t, t, \textcolor{MyRed}{x_T})}{\sigma_t^2}.
    $$
    Consequently, applying the proof technique from Proposition~\ref{thm:reparametrized-main-theorem}, we derive the final expression:
    \begin{gather}
        \min_{\theta}   \Big[\mathbb{E}_{x_t, t, x_0} \big[\lambda(t) \| \widehat{x}_0^*(x_t, t, \textcolor{MyRed}{x_T}) -  x_0 \|^2 \big] -
        \! \min_{\phi} \mathbb{E}_{x_t, t, x_0}  \big[\lambda(t) \| \widehat{x}_0^{\phi}(x_t, t, \textcolor{MyRed}{x_T}) - x_0 \|^2 \big]\Big], 
        \nonumber
        \\
        (x_0, x_T) \sim p_{\theta}(x_0, x_T), \text{ } t \sim U([0, T]), \text{ } x_t \sim q(x_t|x_0, x_T).
        \nonumber
    \end{gather}
\end{proof}

\section{Experimental details}\label{app:experimental-details}
\begin{table}[h]
    \centering
    \begin{tabular}{|c|c|c|c|c|c|c|c|c|c|}
        \hline
        Task & Dataset & Teacher & NFE & $L$/$K$ ratio & LR & Grad Updates & Noise \\
        \hline
        $4 \times$ super-resolution (bicubic) & ImageNet & I$^2$SB & 1 & 5:1 & 5e-5 & 3000 & \checkmark \\
        $4 \times$ super-resolution (pool) & ImageNet & I$^2$SB & 1 & 5:1 & 5e-5 & 3000 & \checkmark \\
        JPEG restoration, QF $=5$ & ImageNet & I$^2$SB & 1 & 5:1 & 5e-5 & 2000 & \checkmark \\
        JPEG restoration, QF $=10$ & ImageNet & I$^2$SB & 1 & 5:1 & 5e-5 & 3000 & \checkmark \\
        Center-inpainting ($128 \times 128$) & ImageNet & I$^2$SB & 4 & 5:1 & 5e-5 & 2000 & \xmark \\
        Sketch to Image & Edges $\rightarrow$ Handbags & DDBM & 2 & 5:1 & 1e-5 & 300 & \checkmark \\
        Sketch to Image & Edges $\rightarrow$ Handbags & DDBM & 1 & 5:1 & 1e-5 & 14000 & \checkmark \\
        Normal to Image & DIODE-Outdoor & DDBM & 2 & 5:1 & 1e-5 & 500 & \checkmark \\
        Normal to Image & DIODE-Outdoor & DDBM & 1 & 5:1 & 1e-5 & 3700 & \checkmark \\
        Center-inpainting ($128 \times 128$) & ImageNet & DDBM & 4 & 1:1 & 3e-6 & 3000 & \checkmark \\
        \hline
    \end{tabular}
    \caption{Table entries specify experimental configurations: \textit{NFE} indicates multi-step training (Sec. \wasyparagraph\ref{sec:multistep}); $L$/$K$ represents bridge/student gradient iteration ratios (Alg. \wasyparagraph\ref{sec:algorithm}); \textit{Grad Updates} shows \underline{student} gradient steps; \textit{Noise} notes stochastic pipeline incorporation.}
    \label{tab:exp_settings}
\end{table}
All hyperparameters are listed in Table \ref{tab:exp_settings}. We used batch size $256$ and ema decay $0.99$ for setups. For each setup, we started the student and bridge networks using checkpoints from the teacher models. In setups where the model adapts to noise: (1) We added extra layers for noise inputs (set to zero initially), (2) Noise was concatenated with input data before input it to the network. Datasets, code sources, and licenses are included in Table \ref{tab:license}. 

\textbf{Training time.} We present the training time of each in Table~\ref{tab:training-time}. About 75\% of this training time is used to get the last 10-20\% decrease of FID (e.g., drop from 3.6 to 2.5 FID in pooling SR setup or from 4.3 to 3.8 FID in JPEG with), while training for the first 25\% of time already provides a good-quality model. On Sketch-to-image and Normal-to-image in multistep regime with 2 NFEs, convergence appears faster than in the corresponding single-step version.

\begin{table}[h]
    \vspace{-3mm}
    \centering
    \caption{The used datasets, codes and their licenses.}
    \label{tab:license}
    \begin{tabular}{|l|l|l|l|}
        \hline
        Name & URL & Citation & License \\
        \hline
        Edges$\rightarrow$Handbags & \href{https://github.com/junyanz/pytorch-CycleGAN-and-pix2pix}{GitHub Link} & \cite{isola2017image} & BSD \\
        DIODE-Outdoor & \href{https://diode-dataset.org/}{Dataset Link} & \cite{vasiljevic2019diode} & MIT \\
        ImageNet & \href{https://www.image-net.org}{Website Link} & \cite{deng2009imagenet} & \textbackslash \\
        Guided-Diffusion & \href{https://github.com/openai/guided-diffusion}{GitHub Link} & \cite{dhariwal2021diffusion} & MIT \\
        I$^2$SB & \href{https://github.com/NVlabs/I2SB}{GitHub Link} & \cite{liu20232} & CC-BY-NC-SA-4.0 \\
        DDBM & \href{https://github.com/alexzhou907/DDBM}{GitHub Link} & \cite{zhou2023denoising} & \textbackslash \\
        DBIM & \href{https://github.com/thu-ml/DiffusionBridge}{GitHub Link} & \cite{zheng2024diffusion} & \textbackslash \\
        \hline
    \end{tabular}
    \vspace{-5mm}
\end{table}

\begin{table}[h]
\color{black}
\centering
\begin{tabular}{|l|l|l|l|c|}
\hline
\textbf{Task} & \textbf{Teacher} & \textbf{Dataset} & \textbf{Approximate time on 8$\times$A100} & \textbf{NFE} \\
\hline
4$\times$ super-resolution (bicubic) & I2SB & Imagenet & 40 hours & 1 \\
4$\times$ super-resolution (pool) & I2SB & Imagenet & 40 hours & 1 \\
JPEG restoration, QF = 5 & I2SB & Imagenet & 40 hours & 1 \\
JPEG restoration, QF = 10 & I2SB & Imagenet & 40 hours & 1 \\
Center-inpainting (128$\times$128) & I2SB & Imagenet & 24 hours & 4 \\
Center-inpainting (128$\times$128) & DDBM & Imagenet & 12 hours & 4 \\
Sketch to Image & DDBM & Edges/Handbags & 40 hours & 1 \\
Sketch to Image & DDBM & Edges/Handbags & 1 hour & 2 \\
Normal to Image & DDBM & DIODE-Outdoor & 48 hours & 1 \\
Normal to Image & DDBM & DIODE-Outdoor & 7 hours & 2 \\
\hline
\end{tabular}
\caption{Training times and NFE across different tasks, teachers, and datasets.}
\label{tab:training-time}
\color{black}
\end{table}

\subsection{Distillation of I$^2$SB models.}
\label{sec:i2sb exps details}
We extended the I$^2$SB repository (see Table \ref{tab:license}), integrating our distillation framework. The following sections outline the setups, adapted following the I$^2$SB. 

\textbf{Multi-step implementation}
In this setup, we use the student model's full inference process during multi-step training (Section \ref{sec:multistep}). This means that $x_0$ is generated with inferenced of the model $G_{\theta}$ through \textit{all} timesteps $\left(T = t_N, \dots, t_1 = 0\right)$ in the multi-step sequence. The generated $x_0$ is subsequently utilized in the computation of the bridge $\widehat{\mathcal{L}}_{\phi}$ or student $\widehat{\mathcal{L}}_{\theta}$ objective functions, as formalized in Algorithm \ref{alg:ibmd}.

\textbf{$4\times$ super-resolution.}
Our implementation of the degradation operators aligns with the filters implementation proposed in DDRM \cite{kawar2022denoising}. Firstly, we synthesize images at $64\times64$ resolution, then upsample them to $256\times256$ to ensure dimensional consistency between clean and degraded inputs. For evaluation, we follow established benchmarks \cite{saharia2022palette, song2023pseudoinverse} by computing the FID on reconstructions from the full ImageNet validation set, with comparisons drawn against the training set statistics.

\textbf{JPEG restoration.}
Our JPEG degradation implementation, employing two distinct quality factors (QF=5, QF=10), follows \cite{kawar2022denoising}. FID is evaluated on a 
$10,000$-image ImageNet \href{https://drive.google.com/drive/u/0/folders/1SgonKNyJlB2s20Hmuo1hoSLjTM72ufqh}{validation subset} against the full validation set’s statistics, following baselines \cite{saharia2022palette, song2023pseudoinverse}.

\textbf{Inpainting.}
For the image inpainting task on ImageNet at $256\times256$ resolution, we utilize a fixed $128\times128$ centrally positioned mask, aligning with the methodologies of DBIM \cite{zheng2024diffusion} and CDBM \cite{he2024consistency}. During training, the model is trained only on the masked regions, while during generation, the unmasked areas are deterministically retained from the initial corrupted image $x_T$ to preserve structural fidelity of unmasked part of images. We trained the model with $4$ NFEs via the multi-step method (Section \ref{sec:multistep}) and tested it with $1, 2,$ and $4$ NFEs.
\subsection{Distillation of DDBM models.}
\label{sec: experimental details ddbm}
We extended the DDBM repository (Table \ref{tab:license}) by integrating our distillation framework. Subsequent sections outline the experimental setups, adapted from the DDBM \citep{zheng2024diffusion}.

\textbf{Multi-step implementation}
In this setup, the multi-step training (Section \ref{sec:multistep}) adopts the methodology of DMD \cite{yin2024improved}, wherein a timestep $t$  is uniformly sampled from the predefined sequence $(t_1, \dots, t_N).$ The model $G_{\theta}$ then generates $x_0$ by iteratively reversing the process from the terminal timestep $t_N = T$ to the sampled intermediate timestep $t$. This generated $x_0$ is subsequently used to compute the bridge network’s loss $\widehat{\mathcal{L}}_{\phi}$ or the student network’s loss $\widehat{\mathcal{L}}_{\theta}$, as detailed in Algorithm \ref{alg:ibmd}.

\textbf{Edges $\rightarrow$ Handbags}
The model was trained utilizing the Edges$\rightarrow$Handbags image-to-image translation task \cite{isola2017image}, with the $64\times64$ resolution images. Two versions were trained under the multi-step regime (Section \ref{sec:multistep}), with $2$ and $1$ NFEs during training. Both models were evaluated using the same NFE to match training settings.

\textbf{DIODE-Outdoor}
Following prior work \cite{zhou2023denoising, zheng2024diffusion, he2024consistency}, we used the DIODE outdoor dataset, preprocessed via the DBIM repository's script for training/test sets (Table \ref{tab:license}). Two versions were trained under the multi-step regime (Section \ref{sec:multistep}), with $2$ and $1$ NFEs during training. Both models were evaluated using the same NFE to match training settings.

\textbf{Inpainting}
All setups matched those in Section \ref{sec:i2sb exps details} inpainting, except we use a CBDM checkpoint \citep{zheng2024diffusion}. This checkpoint is adjusted by the authors to: (1) condition on $x_T$ and (2) ImageNet class labels as input to guide the model. Also this is the same checkpoint used in both CDBM \citep{he2024consistency} and DBIM \citep{zheng2024diffusion} works.
\vspace{-3mm}
\section{Additional results}\label{app:additional-results}
\vspace{-3mm}
\begin{figure}[h]
    \centering
    \includegraphics[width=0.62\linewidth]{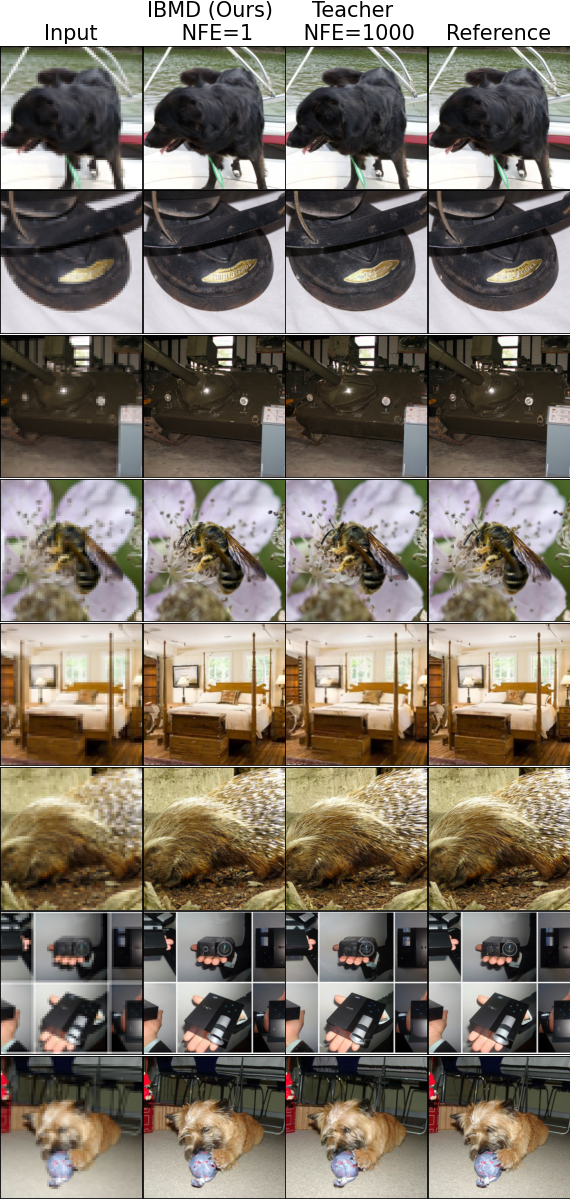}
    \caption{Uncurated samples for IBMD-I$^2$SB distillation of 4x-super-resolution with bicubic kernel on ImageNet $256\times256$ images.}
    \label{fig:i2sb-sr-bicubic}
\end{figure}

\begin{figure}
    \centering
    \includegraphics[width=0.62\linewidth]{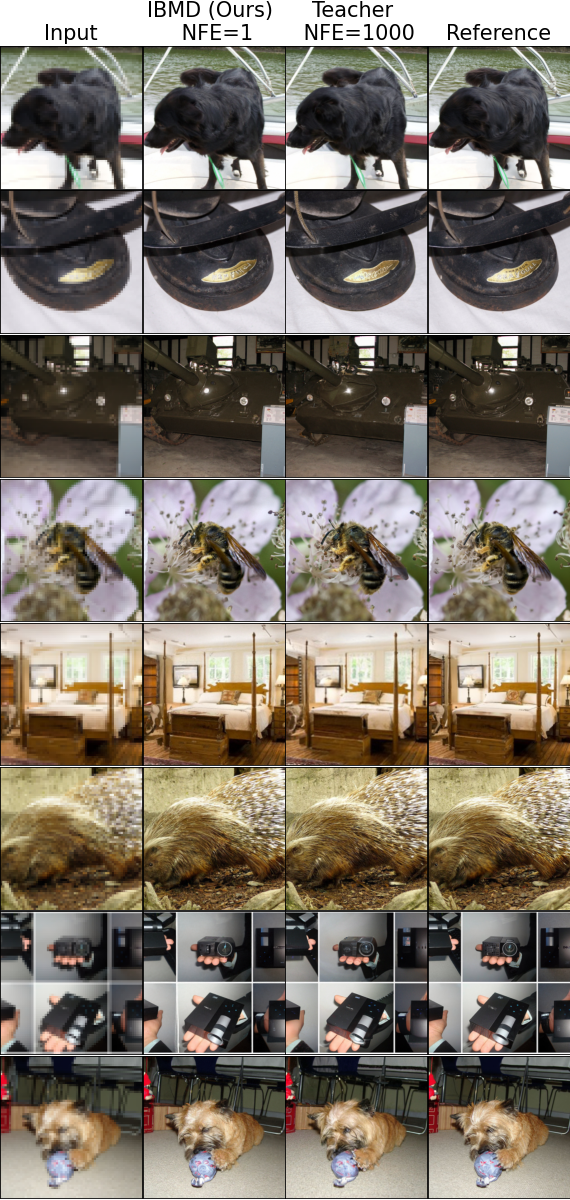}
    \caption{Uncurated samples for IBMD-I$^2$SB distillation of 4x-super-resolution with pool kernel on ImageNet $256\times256$ images.}
    \label{fig:i2sb-sr-pool}
\end{figure}

\begin{figure}
    \centering
    \includegraphics[width=0.62\linewidth]{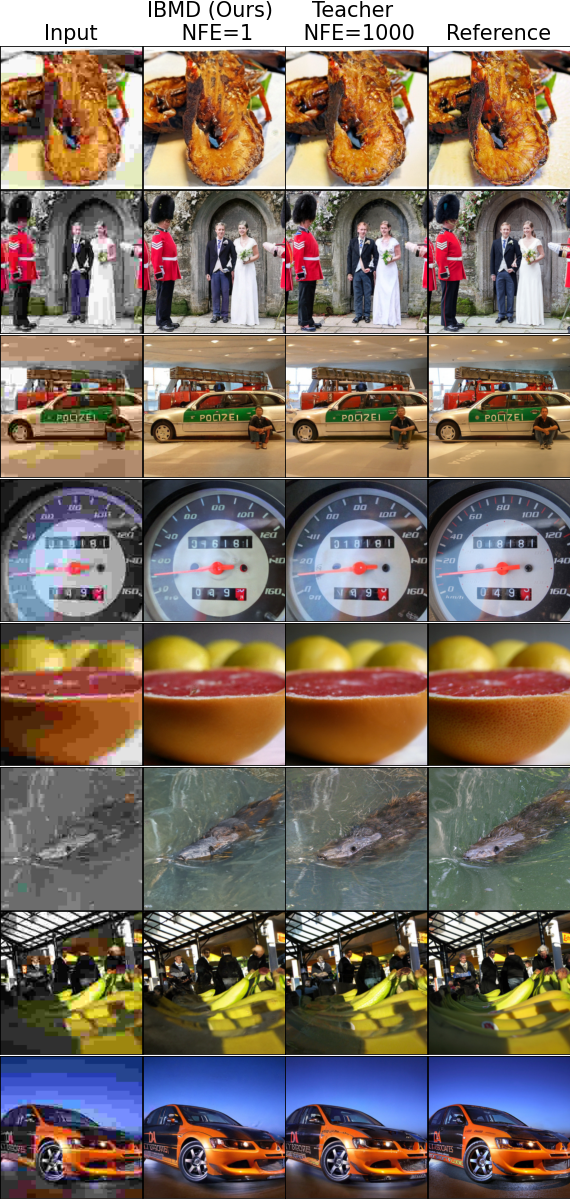}
    \caption{Uncurated samples for IBMD-I$^2$SB distillation of Jpeg restoration with QF=5 on ImageNet $256\times256$ images.}
    \label{fig:i2sb-jpeg-5}
\end{figure}

\begin{figure}
    \centering
    \includegraphics[width=0.62\linewidth]{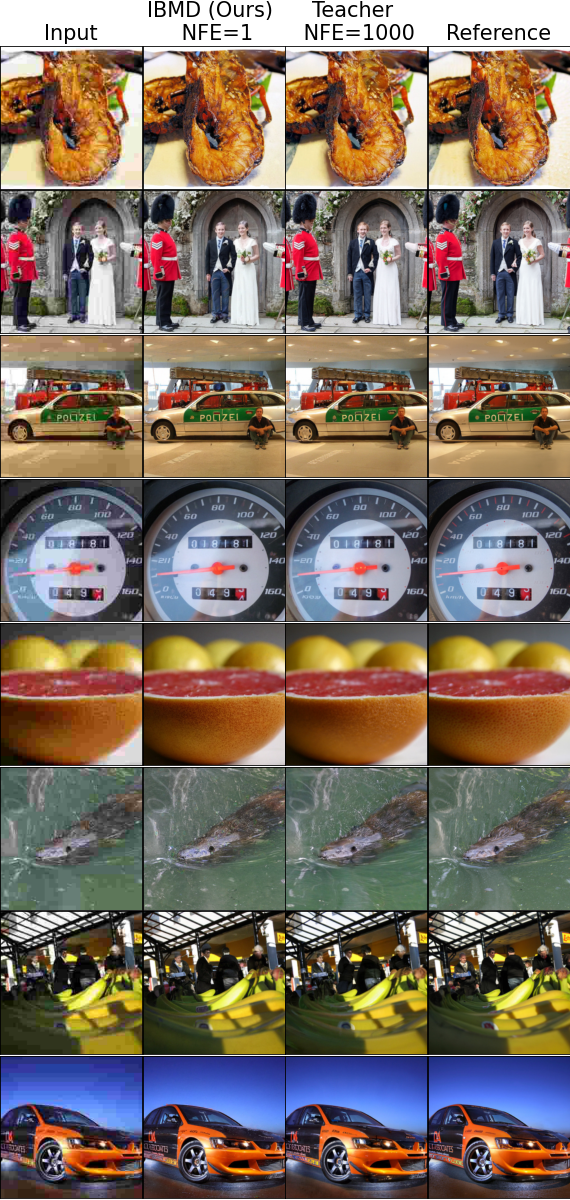}
    \caption{Uncurated samples for IBMD-I$^2$SB distillation of Jpeg restoration with QF=10 on ImageNet $256\times256$ images.}
    \label{fig:i2sb-jpeg-10}
\end{figure}

\begin{figure}
    \centering
    \includegraphics[width=0.95\linewidth]{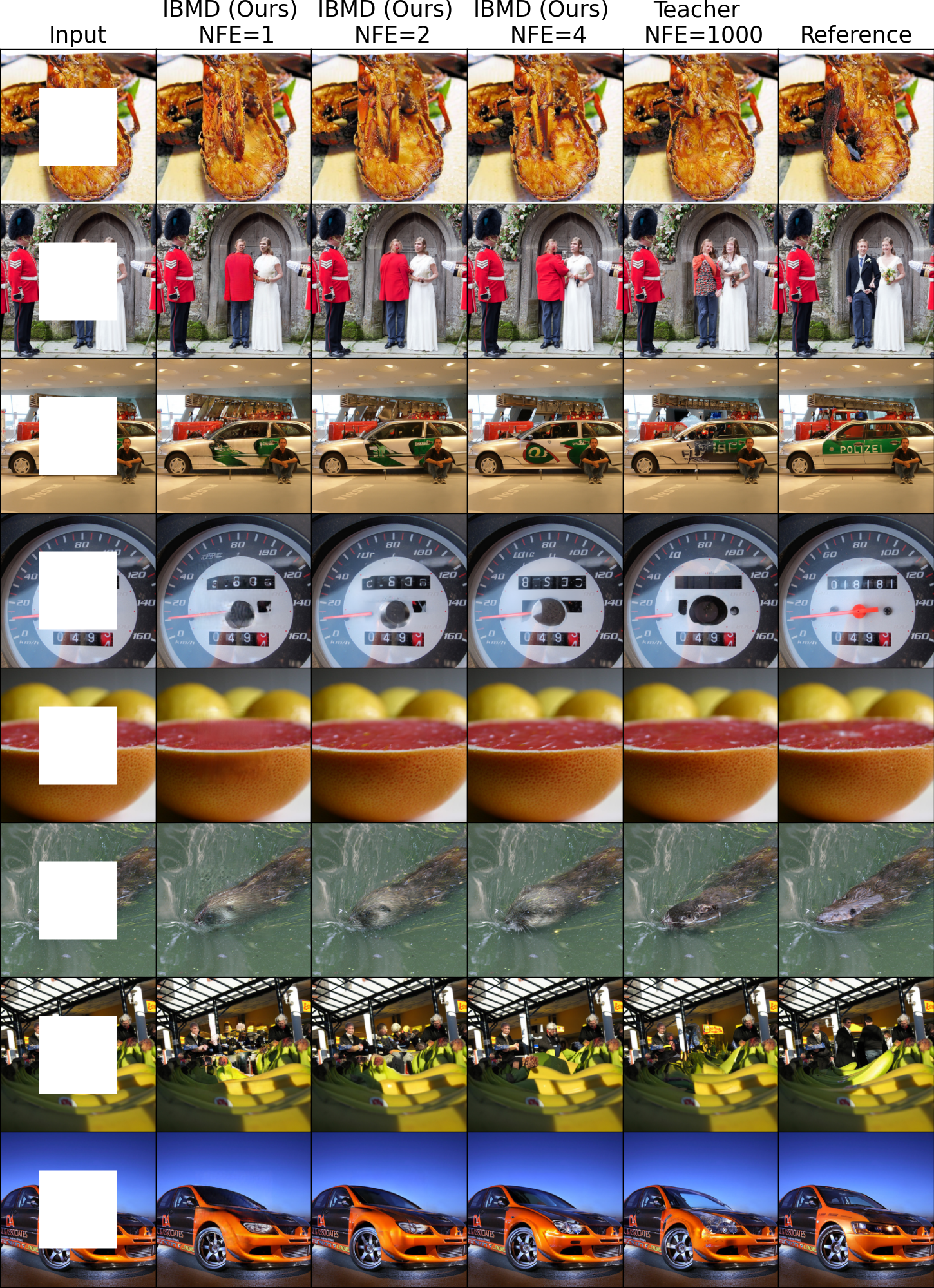}
    \caption{Uncurated samples for IBMD-I2SB distillation trained for inpaiting with NFE$=4$ and inferenced with different inference NFE on ImageNet $256\times256$ images.}
    \label{fig:i2sb-inpainting}
\end{figure}

\begin{figure}
    \centering
    \includegraphics[width=0.95\linewidth]{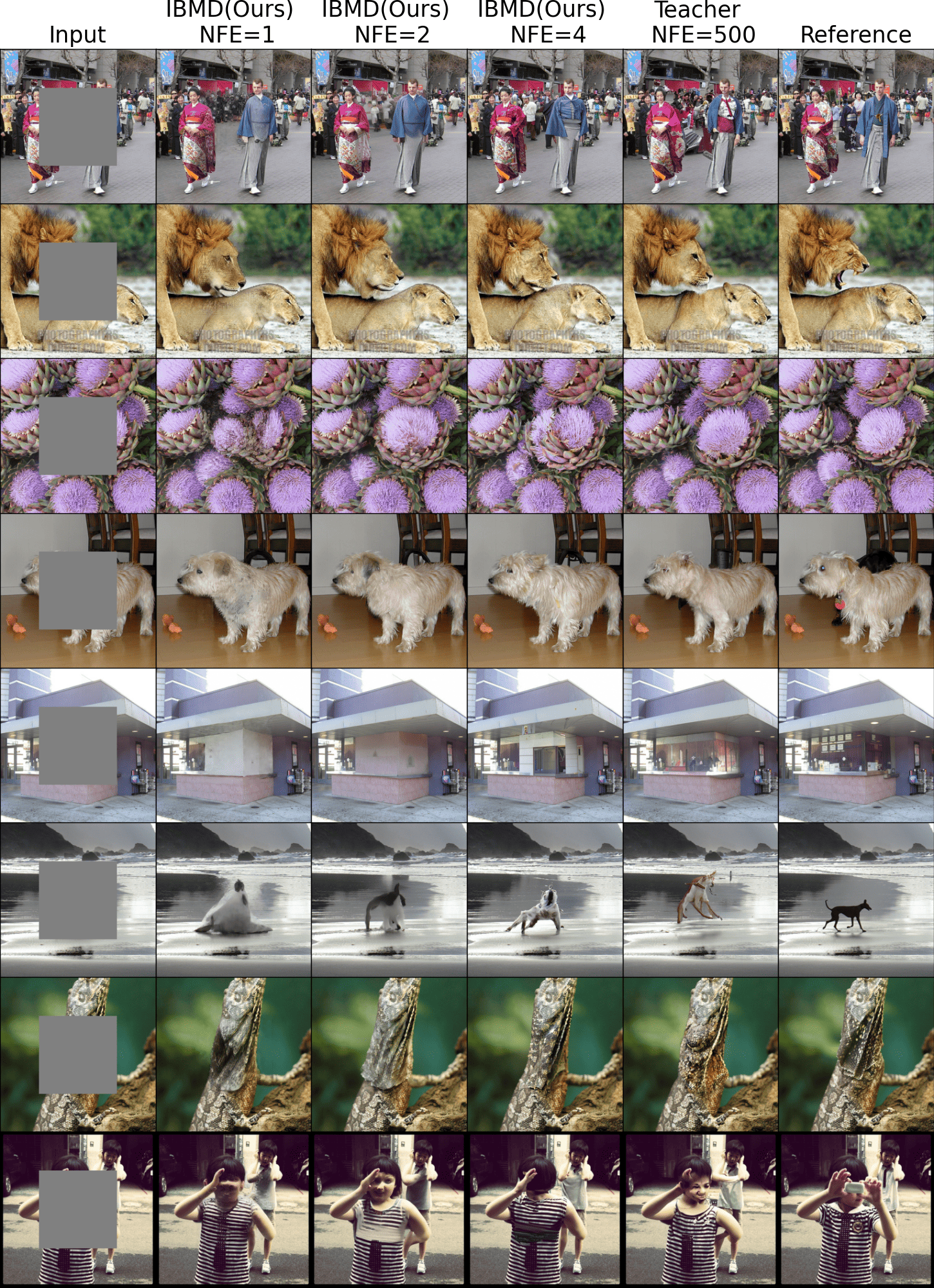}
    \caption{Uncurated samples for IBMD-DDBM distillation trained for inpaiting with NFE$=4$ and inferenced with different inference NFE on ImageNet $256\times256$ images.}
    \label{fig:inpainting ddbm}
\end{figure}

\begin{figure}
    \centering
    \includegraphics[width=0.785\linewidth]{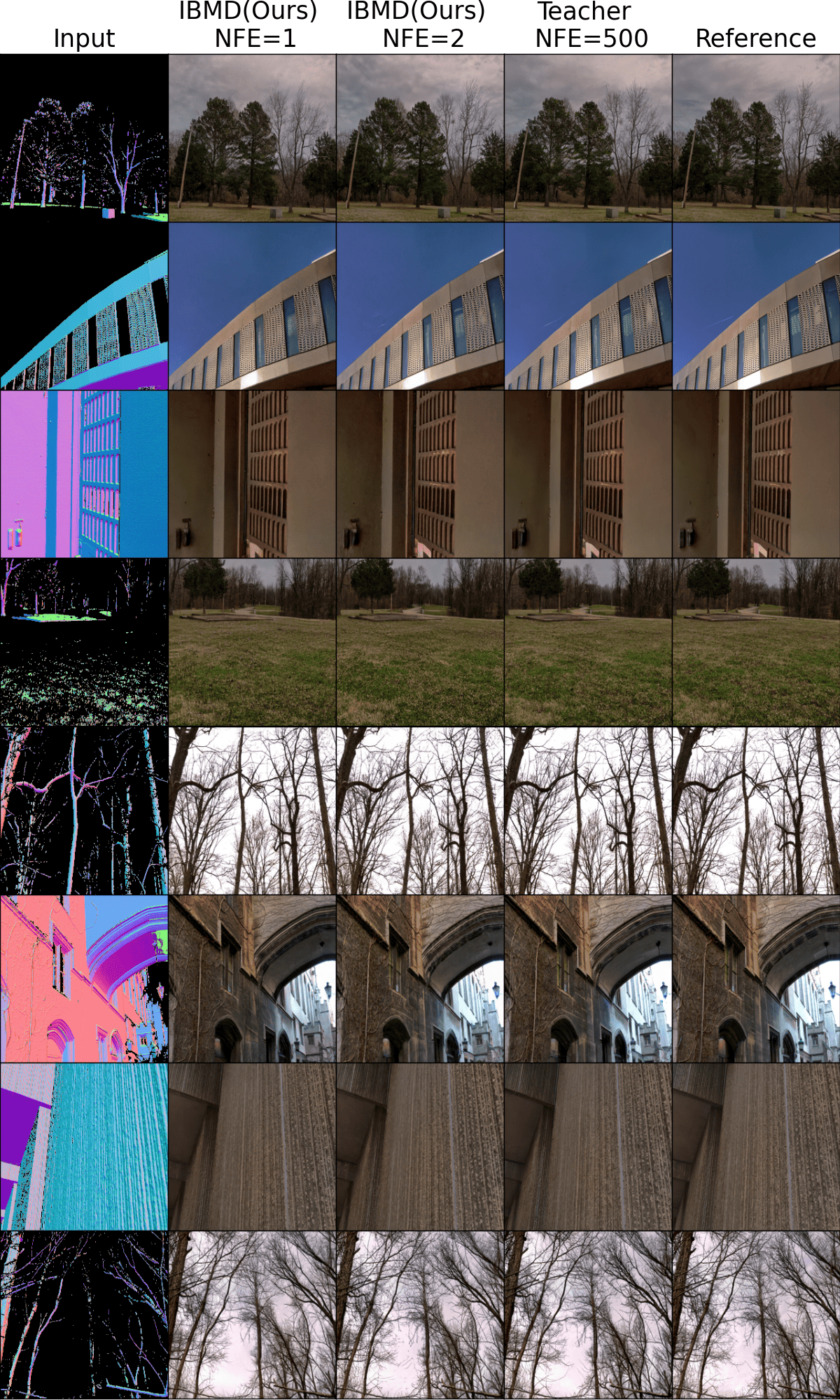}
    \caption{Uncurated samples from IBMD-DDBM distillation trained on the DIODE-Outdoor dataset ($256\times256$) with NFE$=2$ and NFE$=1$, inferred using the corresponding NFEs \underline{on the training set.}}
    \label{fig:diode train}
\end{figure}

\begin{figure}
    \centering
    \includegraphics[width=0.785\linewidth]{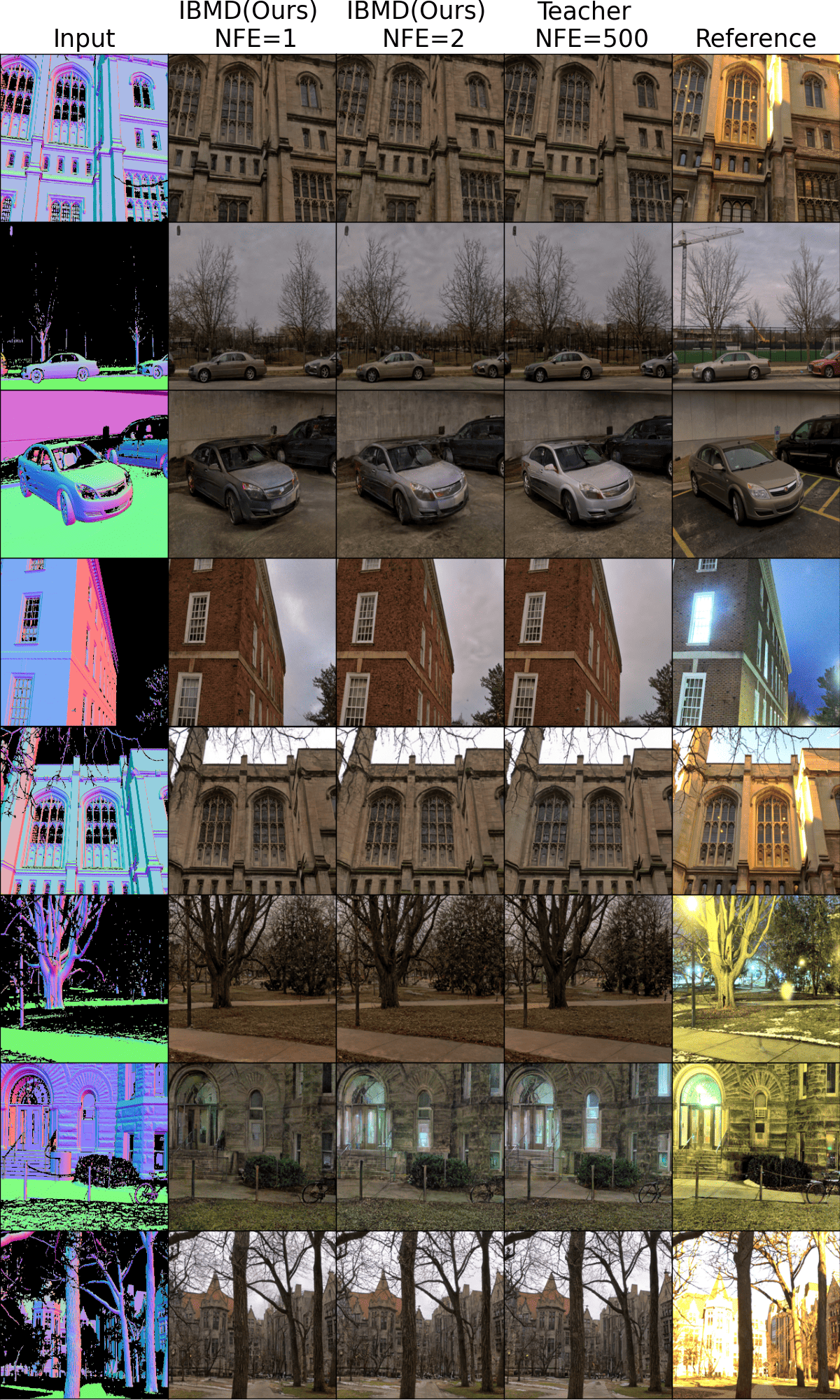}
    \caption{Uncurated samples from IBMD-DDBM distillation trained on the DIODE-Outdoor dataset ($256\times 256$) with NFE$=2$ and NFE$=1$, inferred using the corresponding NFEs \underline{on the test set.}}
    \label{fig:diode test}
\end{figure}

\begin{figure}
    \centering
    \includegraphics[width=0.785\linewidth]{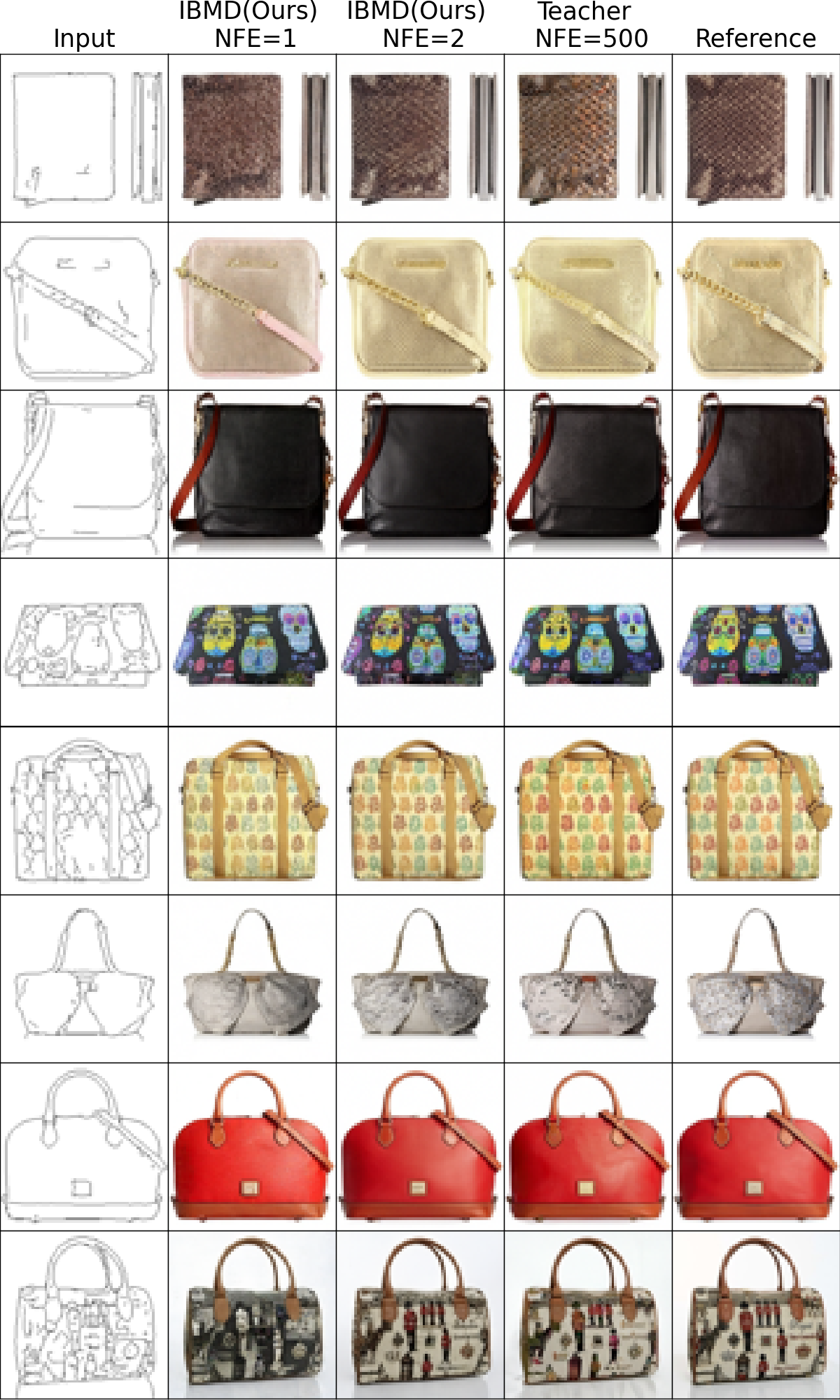}
    \caption{Uncurated samples from IBMD-DDBM distillation trained on the Edges $\rightarrow$ Handbags dataset ($64\times 64$) with NFE$=2$ and NFE$=1$, inferred using the corresponding NFEs \underline{on the training set.}}
    \label{fig:e2h train}
\end{figure}

\begin{figure}
    \centering
    \includegraphics[width=0.785\linewidth]{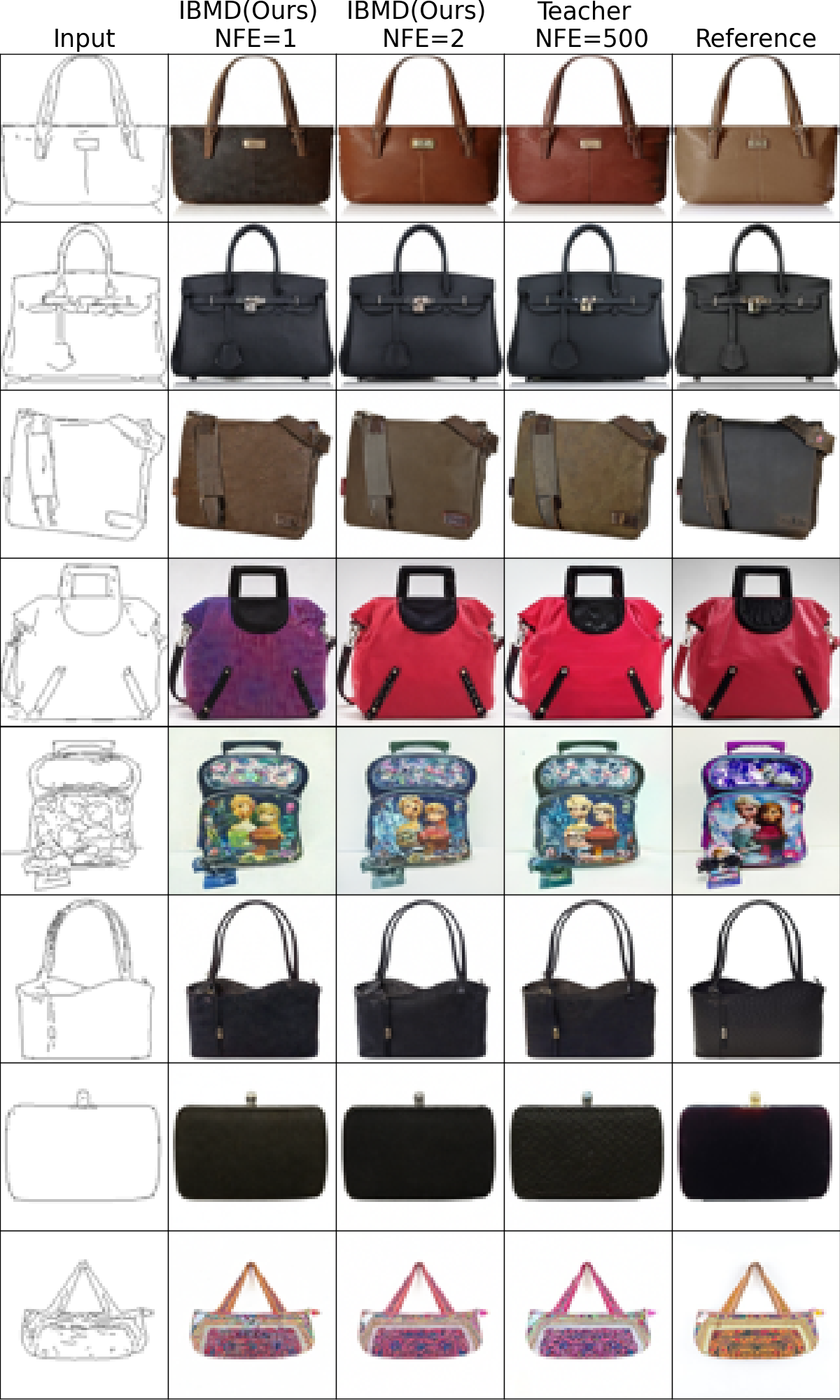}
    \caption{Uncurated samples from IBMD-DDBM distillation trained on the Edges $\rightarrow$ Handbags dataset ($64\times 64$) with NFE$=2$ and NFE$=1$, inferred using the corresponding NFEs \underline{on the test set.}}
    \label{fig:e2h test}
\end{figure}


\end{document}